\documentclass[10pt]{article}
\usepackage{fancyhdr}
\usepackage{extramarks}
\usepackage{amsmath}
\usepackage{amsthm}
\usepackage[utf8]{inputenc}   
\usepackage[T1]{fontenc}
\usepackage{newunicodechar}   
 \usepackage{dblfloatfix} 
\usepackage{float}

\usepackage{booktabs}
\usepackage{tabularx}

\newunicodechar{α}{$\alpha$}
\newunicodechar{β}{$\beta$}
\newunicodechar{γ}{$\gamma$}
\newunicodechar{κ}{$\kappa$}
\newunicodechar{δ}{$\delta$}

\newunicodechar{–}{--}       
\newunicodechar{—}{---}      
\newunicodechar{’}{'}        
\newunicodechar{±}{$\pm$}

\newunicodechar{­}{}
\usepackage{amsfonts}
\usepackage{siunitx}
\usepackage{tikz}
\usepackage[plain]{algorithm}
\usepackage{algpseudocode}
\usepackage{multirow}
\usepackage{booktabs}
\usepackage{graphicx}
\usepackage{subfigure}
\usepackage[margin=1in]{geometry}
\usepackage{booktabs}
\usepackage{makecell}
\usepackage{array} 
\usepackage[colorlinks,linkcolor=black,anchorcolor=black,citecolor=black,urlcolor=blue]{hyperref}
\usepackage{hyphenat}
\usepackage{amsmath,bm}
\usepackage{booktabs}
\usepackage{mathtools}
\usepackage{amssymb}
\usepackage{tikz-cd}
\usepackage{caption}
\usepackage{capt-of}
\usepackage{mciteplus}
\usepackage{cite}
\usepackage{mathrsfs}
\usepackage[title,titletoc,toc]{appendix}
\usepackage{xr}
\usepackage{parskip}
\usepackage{soul}
\usepackage{textcomp}
\usepackage[colaction]{multicol}
\usepackage[switch]{lineno}
\usepackage{lipsum}
\usepackage{etoolbox}
\usepackage{longtable}
\usepackage{array}
\usepackage{tablefootnote}
\usepackage{ragged2e}
\usepackage{soul}
\newcolumntype{C}[1]{>{\centering\arraybackslash}p{#1}}
\usetikzlibrary{automata,positioning}
\usepackage{float}

\usetikzlibrary{automata,positioning}

\newtheorem{remark}{Remark}[section]

\newtheorem{prop}{Proposition}[section]

\usepackage{amsthm}  %
\theoremstyle{definition}
\newtheorem{definition}{Definition}[section]

\usepackage{xr}
\begin{document}
	\title{A Hierarchical Sheaf Spectral Embedding Framework for Single-Cell RNA-seq Analysis}
    
	\author{Xiang Xiang Wang$^{1}$,  and Guo-Wei Wei$^{1,2,3}$\footnote{
			Corresponding author.		Email: weig@msu.edu} \\
		\\
		$^1$ Department of Mathematics, \\
		Michigan State University, East Lansing, MI 48824, USA.\\
        $^2$ Department of Biochemistry and Molecular Biology,\\
		Michigan State University, East Lansing, MI 48824, USA.  \\
		$^3$ Department of Electrical and Computer Engineering,\\
		Michigan State University, East Lansing, MI 48824, USA. \\
        \\
	}
	\date{\today} 
	
	\maketitle
	
\begin{abstract}
Single-cell RNA-seq   data analysis typically requires representations that capture heterogeneous local structure across multiple scales while remaining stable and interpretable.
In this work, we propose a hierarchical sheaf spectral embedding (HSSE) framework that constructs informative cell-level features based on persistent sheaf Laplacian analysis.
Starting from scale-dependent low-dimensional embeddings, we define cell-centered local neighborhoods at multiple resolutions.
For each local neighborhood, we construct a data-driven cellular sheaf that encodes local relationships among cells.
We then compute persistent sheaf Laplacians over sampled filtration intervals and extract spectral statistics that summarize the evolution of local relational structure across scales.
These spectral descriptors are aggregated into a unified feature vector for each cell and can be directly used in downstream learning tasks without additional model training.
We evaluate HSSE on twelve benchmark single-cell RNA-seq datasets covering diverse biological systems and data scales.
Under a consistent classification protocol, HSSE achieves competitive or improved performance compared with existing multiscale and classical embedding-based methods across multiple evaluation metrics.
The results demonstrate that sheaf spectral representations provide a robust and interpretable approach for single-cell RNA-seq data representation learning.
\end{abstract}

\noindent\textbf{Keywords:} single-cell RNA-seq data analysis, sheaf theory, persistent sheaf Laplacian, multiscale representation learning

\section{Introduction}

Single-cell RNA-seq   technologies have significantly advanced the study of cellular heterogeneity by enabling measurement of gene expression at the resolution of individual cells~\cite{macosko2015highly}. 
These technologies produce increasingly large and complex datasets that reveal diverse biological states across tissues, developmental stages, and disease conditions~\cite{luecken2019current,cang2020inferring}. 
The challenge in single-cell RNA-seq data analysis now lies in developing computational and statistical methods to reliably analyze high-dimensional and noisy measurements~\cite{stegle2015computational, luecken2019current}.
The performance of downstream, such as classification, depends on how well the learned representations preserve the intrinsic geometry of the data~\cite{abdelaal2019comparison}.  

Single-cell RNA-seq data exhibit hierarchical structures across multiple scales~\cite{moon2019visualizing}.
Such structure is often characterized by an underlying manifold or intrinsic geometry~\cite{coifman2005geometric}.
Standard dimensionality reduction methods, including principal component analysis (PCA)~\cite{bro2014principal}, 
t-distributed stochastic neighbor embedding (t-SNE)~\cite{maaten2008visualizing}, non-negative matrix factorization (NMF)~\cite{lee1999learning}, and the Uniform Manifold Approximation and Projection (UMAP) algorithm~\cite{becht2019dimensionality},   map cells into low-dimensional Euclidean spaces.
Nonlinear embedding methods such as UMAP and tSNE typically rely on a single neighborhood resolution, which may limit their ability to capture hierarchical biological structure~\cite{becht2019dimensionality,maaten2008visualizing}.


Topological methods such as computational homology \cite{kaczynski2004computational} and persistent homology\cite{Edelsbrunner2010, zomorodian2004computing},  have provided new insights into single-cell datasets~\cite{beamer2025phd,huynh2024topological}.
These approaches characterize how connectivity and topological features evolve across filtrations, capturing multi-scale structural information that is difficult to obtain using conventional statistical or geometric approaches~\cite{Ghrist2008}. 
However, persistent homology primarily captures topological invariants and does not encode all geometric information in the data, which motivates the development of persistent topological Laplacians that integrate both topological and geometric features~\cite{ wei2025persistent}.

Persistent Laplacian-based spectral frameworks have been introduced to integrate topological and geometric information in a unified manner~\cite{wang2020persistent,wang2023persistent}.
The harmonic spectra recover topological invariants consistent with persistent homology, while the non-harmonic spectra capture additional geometric variations across filtrations~\cite{liu2024algebraic, memoli2022persistent,wei2025persistent,jones2025persistent}.
This new approach has been widely applied in computational biology and molecular science\cite{liu2021persistent,wee2025review,su2025topological}
Persistent Laplacians have been applied to single-cell RAN-seq data \cite{cottrell2024k} and sngle-cell spatial transcriptomics analysis \cite{cottrell2025multiscale}. Hodge  Laplacian and decomposition have been applied to single-cell velocity analysis \cite{su2024hodge}.  
However, many topological and spectral methods are constructed in a global manner, which may limit their ability to capture localized and hierarchical structures, particularly in large-scale datasets. 



Sheaf theory provides a principled mathematical framework for encoding local information in structured domains such as graphs and cell complexes~\cite{Edelsbrunner2010}. 
In particular, sheaf Laplacians relate local vector spaces through restriction maps that enforce compatibility among adjacent nodes, enabling the modeling of localized interactions in structured data~\cite{hansen2019toward}.
Persistent sheaf Laplacians were proposed to incorporate multiscale analysis into this local method \cite{wei2025persistentsheaf}. 
This local multiscale spectral theory has demonstrated its effectiveness  in learning from structured data~\cite{hayes2025persistent,hayes2026predicting}.
A software package has been developed to implement persistent topological Laplacians, including persistent sheaf Laplacians, for large scale computations \cite{jones2025petls}.
Persistent sheaf Laplacians are particularly relevant for single-cell RNA-seq data, where cellular heterogeneity often manifests across multiple scales and local neighborhoods \cite{cottrell2026computational}. However, existing approaches typically rely on fixed graph structures or additional supervision, and do not explicitly account for multi-scale geometric variation in the data. These observations suggest the importance of developing frameworks that integrate local modeling, multi-scale structure, and computational scalability for single-cell RNA-seq data analysis.

Motivated by these challenges, we develop a hierarchical spectral framework that integrates multi-level geometric information with sheaf-theoretic operators for single-cell RNA-seq data analysis.
The framework consists of three main components. First, each cell is associated with a collection of local neighborhoods at multiple resolutions.
Second, in each local neighborhood, we compute spectral features derived from sheaf Laplacians to capture additional structural variation.
Third, these features are aggregated across different resolutions and neighborhoods to form a unified representation for downstream tasks.

In this study, we construct a hierarchical representation through a multi-stage process.
We begin with a family of scale-dependent low-dimensional embeddings generated under varying neighborhood configurations, capturing structural information from fine-grained local patterns to broader biological organization.
For each embedding, we construct multiple $k$-nearest neighbor graphs to model local neighborhood structures at different scales.
On each resulting local patch, we sample a sequence of filtration values and compute persistent sheaf Laplacians to encode scale-dependent relational information.
From these operators, we extract eigenvalue-based spectral statistics that characterize the evolution of local structures across filtrations.
Finally, we aggregate spectral features across all representation scales, neighborhood sizes, and filtration values to obtain a unified feature vector for each cell, which can be used for classification and related downstream tasks.
We evaluate the proposed framework on twelve benchmark single-cell RNA-seq datasets spanning mouse, human, immune, neuronal, and pancreatic systems.
Across these datasets, the method achieves competitive classification performance and outperforms baseline manifold learning and topology-based methods in many cases.
In addition, the approach shows stable performance across different datasets and parameter settings. These results suggest that incorporating hierarchical and local spectral information provides an effective approach for constructing feature representations in complex single-cell RNA-seq datasets.


The remainder of this paper is organized as follows.
Section~2 reviews the mathematical background on cellular sheaves,
persistent sheaf Laplacians, and spectral operators.
Section~3 presents the hierarchical sheaf spectral framework in detail.
Section~4 reports experimental results in twelve single-cell RNA-seq datasets and includes an ablation analysis.
Section~5 discusses the effects of multi-scale design choices,
the contribution of topological information, and the robustness of the proposed framework.
Section~6 concludes the paper.

\section{Mathematical Foundations}

This section establishes the mathematical framework and notation adopted in this work.
We outline the key components needed in the subsequent development,
including cellular sheaves defined on simplicial complexes,
their associated cochain constructions and sheaf Laplacians,
as well as filtration procedures and the resulting spectral operators.
Together, these elements form the basis of the persistent sheaf Laplacian framework
and support the subspace representations used for comparison in later sections.

\begin{table}[h]
  \centering
  \begin{tabular}{c|l}
    \toprule
    \textbf{Symbol} & \textbf{Description} \\
    \midrule
    $\mathbb{R}$ & Real number field \\
    $\mathbb{R}^n$ & $n$ dimensional real vector space \\
    $\mathbb{R}^{n \times n}$ & Set of all real $n \times n$ matrices \\

    $X$ & Single cell data matrix with rows representing cells \\
    $N$ & Number of cells in the dataset \\
    $d$ & Ambient feature dimension of the data \\

    $\mathcal{S}$ & The set of scale parameters
    \\
    $\mathcal{E}_s$ & Low dimensional embedding of the dataset at scale $s$ \\

    $\mathcal{K}$ & The
    set of neighborhood sizes
    \\
    $\mathcal{N}_k(i)$ & Set of $k$ nearest neighbors of cell $i$ under embedding $\mathcal{E}_s$ \\

    $\mathcal{P}_{i}^{s,k}$ & Local neighborhood patch centered at cell $i$ constructed at scale $s$ with $k$ neighbors \\

    $\mathcal{F}$ & Filtration parameter used in persistent sheaf Laplacian construction \\
    $\{\mathcal{F}_\ell\}_{\ell=1}^L$ & Sampled filtration values \\

    $\mathcal{S}_{i}^{s,k}$ & Cellular sheaf defined on the local patch 
    \\
    $\mathcal{L}^{q}_{\mathcal{S}_{i}^{s,k}}(\mathcal{F}_\ell)$ & Persistent sheaf Laplacian of degree $q$ at filtration $\mathcal{F}_\ell$ \\

    $\lambda_j(\mathcal{F}_\ell)$ & $j$th eigenvalue of the sheaf Laplacian at filtration $\mathcal{F}_\ell$ \\
    $\boldsymbol{\lambda}(\mathcal{F}_\ell)$ & Eigenvalue spectrum at filtration $\mathcal{F}_\ell$ \\

    $\phi(\cdot)$ & Spectral feature extraction function \\
    $\mathbf{z}_i$ & Aggregated spectral feature vector of cell $i$ \\

    $y_i$ & Ground truth label of cell $i$ \\
    $\hat{y}_i$ & Predicted label of cell $i$ \\
    \bottomrule
  \end{tabular}
  \caption{Notations used in the paper.}
  \label{tab:notation}
\end{table}

\subsection{Simplices and Simplicial Complexes}

Simplices and simplicial complexes serve as a combinatorial language
for encoding higher-order relationships in a finite set of points.
They play a central role in topological data analysis
by providing a discrete structure that captures geometric and relational information
\cite{Edelsbrunner2010}.

Let $u_0,u_1,\dots,u_k \in \mathbb{R}^d$.
A point $x$ is said to be an \emph{affine combination} of these points
if it can be written as
\[
x=\sum_{i=0}^k \lambda_i u_i,
\quad \text{with} \quad \sum_{i=0}^k \lambda_i = 1.
\]
The points $u_0,\dots,u_k$ are called \emph{affinely independent}
if such a representation is unique, that is,
whenever
\[
\sum_{i=0}^k \lambda_i u_i = \sum_{i=0}^k \mu_i u_i,
\]
it follows that $\lambda_i=\mu_i$ for all $i$.
This condition is equivalent to requiring that
$\{u_i - u_0\}_{i=1}^k$ are linearly independent.

Given affinely independent points $u_0,\dots,u_k$,
their convex hull
\[
\sigma = \mathrm{conv}\{u_0,u_1,\dots,u_k\}
\]
defines a \emph{$k$-simplex}.
Any simplex generated by a non-empty subset of these vertices
is referred to as a \emph{face} of $\sigma$.

Based on these constructions, we define a simplicial complex.

\begin{definition}[Simplicial Complex]
A \emph{simplicial complex} $K$ is a finite collection of simplices satisfying:
\begin{enumerate}
    \item if $\sigma \in K$, then every face $\tau \subseteq \sigma$ also belongs to $K$;
    \item for any $\sigma, \tau \in K$, the intersection $\sigma \cap \tau$
    is either empty or a face shared by both.
\end{enumerate}
We denote by $K^{(k)}$ the set of all $k$-simplices in $K$,
and define the dimension of $K$ as
\[
\dim K = \max_{\sigma \in K} \dim \sigma.
\]
\end{definition}

We use the notation $\sigma \le \tau$ to indicate that $\sigma$ is a face of $\tau$.
Simplicial complexes can be viewed as a particular instance
of regular cell complexes \cite{lundell2012topology,curry2014sheaves,hansen2019toward}.
 
\subsection{Cellular Sheaves on Simplicial Complexes}

In this work, we focus on cellular sheaves defined in simplicial complexes.
This choice is motivated by two considerations.
First, simplicial complexes form a subclass of regular cell complexes
and therefore admit the standard theory of cellular sheaves.
Second, simplicial complexes arise naturally in data-driven settings,
such as those constructed from pairwise distances or neighborhood graphs,
making them particularly suitable for computational and applied purposes.
Our presentation follows the simplicial formulation adopted in
\cite{wei2025persistent}, which enables the construction of persistent
sheaf Laplacians in a multi-scale setting.

 \begin{definition}[Cellular Sheaf]
Let $X$ be a simplicial complex with face relation denoted by $\sigma \le \tau$.
A \emph{cellular sheaf} $\mathcal{F}$ on $X$ consists of:
\begin{enumerate}
    \item for each simplex $\sigma \in X$, a finite-dimensional vector space
    $\mathcal{F}(\sigma)$;
    \item for each pair $\sigma \le \tau$, a linear map
    \[
    \mathcal{F}_{\sigma\le\tau} : \mathcal{F}(\sigma) \to \mathcal{F}(\tau),
    \]
\end{enumerate}
such that for any $\rho \le \sigma \le \tau$,
\[
\mathcal{F}_{\rho\le\tau}
=
\mathcal{F}_{\sigma\le\tau} \circ \mathcal{F}_{\rho\le\sigma},
\qquad
\mathcal{F}_{\sigma\le\sigma} = \mathrm{id}.
\]
The space $\mathcal{F}(\sigma)$ is called the stalk over $\sigma$,
and $\mathcal{F}_{\sigma\le\tau}$ is called a restriction map.
\end{definition}


\subsection{Sheaf Cochains and Coboundary Operators}

Let $\mathcal{F}$ be a cellular sheaf defined on a simplicial complex $X$.
From $\mathcal{F}$, one can construct a cochain sequence in which the linear
structure is induced by the restriction maps of the sheaf.
This construction will be used later to define sheaf Laplacians.

For each integer $k \ge 0$, the space of $k$-cochains is given by
\[
C^k(X;\mathcal{F})
=
\bigoplus_{\sigma \in X^{(k)}} \mathcal{F}(\sigma),
\]
where $X^{(k)}$ denotes the set of $k$-simplices of $X$.
Thus, a $k$-cochain assigns to every $k$-simplex $\sigma$
a vector in the corresponding stalk $\mathcal{F}(\sigma)$.

The coboundary operator
\[
\delta_k : C^k(X;\mathcal{F}) \to C^{k+1}(X;\mathcal{F})
\]
is defined using the restriction maps associated with the sheaf.
More precisely, for each relation $\sigma \le \tau$ with
$\sigma \in X^{(k)}$ and $\tau \in X^{(k+1)}$,
the corresponding component of $\delta_k$ is given by
\[
\mathcal{F}_{\sigma \le \tau} : \mathcal{F}(\sigma) \to \mathcal{F}(\tau),
\]
together with a sign determined by an orientation of simplices.
Combining these maps over all such pairs $(\sigma,\tau)$
defines a linear operator $\delta_k$ between the cochain spaces.

These operators satisfy the relation
\[
\delta_{k+1} \circ \delta_k = 0,
\]
and therefore form a sequence
\[
0 \longrightarrow C^0(X;\mathcal{F})
\stackrel{\delta_0}{\longrightarrow} C^1(X;\mathcal{F})
\stackrel{\delta_1}{\longrightarrow} C^2(X;\mathcal{F})
\longrightarrow \cdots,
\]
which is referred to as the sheaf cochain complex.

 \subsection{Persistent Sheaf Laplacians}

To capture multi-scale structure, cellular sheaves and their associated
operators are considered over a filtration of simplicial complexes.
This leads to the notion of a persistent sheaf Laplacian, which extends
the classical persistent Laplacian by incorporating sheaf-valued data
and restriction maps \cite{wei2025persistent}.

Let $\{X_t\}_{t \in \mathcal{T}}$ be a filtration of simplicial complexes,
that is, a nested family
\[
X_{t_1} \subseteq X_{t_2} \subseteq \cdots \subseteq X_{t_m},
\]
indexed by a finite set of scale parameters $\mathcal{T}$.
Let $\mathcal{F}$ be a cellular sheaf defined on each $X_t$,
with compatible stalks and restriction maps across the filtration.

For each scale $t$ and each degree $k \ge 0$, the sheaf cochain complex
$(C^\bullet(X_t;\mathcal{F}), \delta_\bullet(t))$
induces a family of scale-dependent operators.
The \emph{$k$-th persistent sheaf Laplacian} at scale $t$ is defined as
\[
L_k(t)
=
\delta_k(t)^{\top}\delta_k(t)
+
\delta_{k-1}(t)\delta_{k-1}(t)^{\top},
\]
where $\delta_k(t)$ denotes the coboundary operator on $X_t$,
and the adjoint is taken with respect to the standard inner product
on the cochain spaces.
By convention, the second term is omitted when $k=0$.

For each fixed $t$, the operator $L_k(t)$ is symmetric and positive semidefinite,
and hence admits a real, nonnegative spectrum.
As the scale parameter varies, the eigenvalues of $L_k(t)$ evolve with the
filtration, yielding a family of spectral quantities that encode
the interaction between local sheaf data and the underlying multi-scale
combinatorial structure.

To aggregate spectral information over a range of filtration values,
one considers persistent operators defined on filtration intervals.
Following \cite{wei2025persistent}, let $a,b \in \mathcal{T}$ with $a \le b$.
The \emph{$k$-th $(a,b)$-persistent sheaf Laplacian} is defined as an operator
\[
L^{a,b}_k : C^k(X_a;\mathcal{F}) \longrightarrow C^k(X_a;\mathcal{F}),
\]
given by
\[
L^{a,b}_k
=
\delta^{a,b}_k \, (\delta^{a,b}_k)^{\top}
+
\delta^{a}_{k-1} \, (\delta^{a}_{k-1})^{\top}.
\]
Here, $\delta^{a}_{k}$ denotes the coboundary operator on $X_a$, and
$\delta^{a,b}_k$ is the persistent coboundary operator induced by the
inclusion $X_a \hookrightarrow X_b$.
The adjoint is taken with respect to the standard inner product on the
cochain spaces.

The operator $L^{a,b}_k$ is symmetric and positive semidefinite, and its
spectrum captures sheaf-consistent features that persist across the
filtration interval $[a,b]$.
Varying the interval endpoints $(a,b)$ yields a family of persistent
spectral descriptors associated with the underlying filtration.

In this work, we use the eigenvalues of persistent sheaf Laplacians
as scale-dependent features.
This spectral representation provides a principled way to summarize
multi-scale geometric and relational information, and forms the basis
of the hierarchical embedding framework introduced in the next section.




\section{Hierarchical Sheaf Spectral Embedding Framework}

In this section, we present a hierarchical spectral framework based on
persistent sheaf Laplacians for single-cell data analysis.
The framework is organized into a three-level pipeline:
multi-scale representations are constructed first,
local neighborhood complexes are then built within each representation,
and persistent sheaf Laplacian spectra are finally computed over sampled
filtration intervals.
Eigenvalues obtained under different combinations of representation scale,
neighborhood size, and filtration parameter are aggregated into feature
vectors, which are subsequently used for downstream tasks such as
classification.
This section formalizes each component of the framework and clarifies
their roles within the overall pipeline.

\subsection{Overall Framework}

Let $X \in \mathbb{R}^{m \times n}$ denote a single-cell gene expression matrix,
where $m$ is the number of cells and $n$ is the number of measured genes.
The $i$-th row $x_i \in \mathbb{R}^n$ represents the expression profile of
the $i$-th cell.
The objective of the proposed framework is to construct, for each cell,
a feature representation that captures multi-scale relational structure
through persistent sheaf Laplacian spectra.

\begin{figure}[t]
    \centering
    \includegraphics[width=\textwidth]{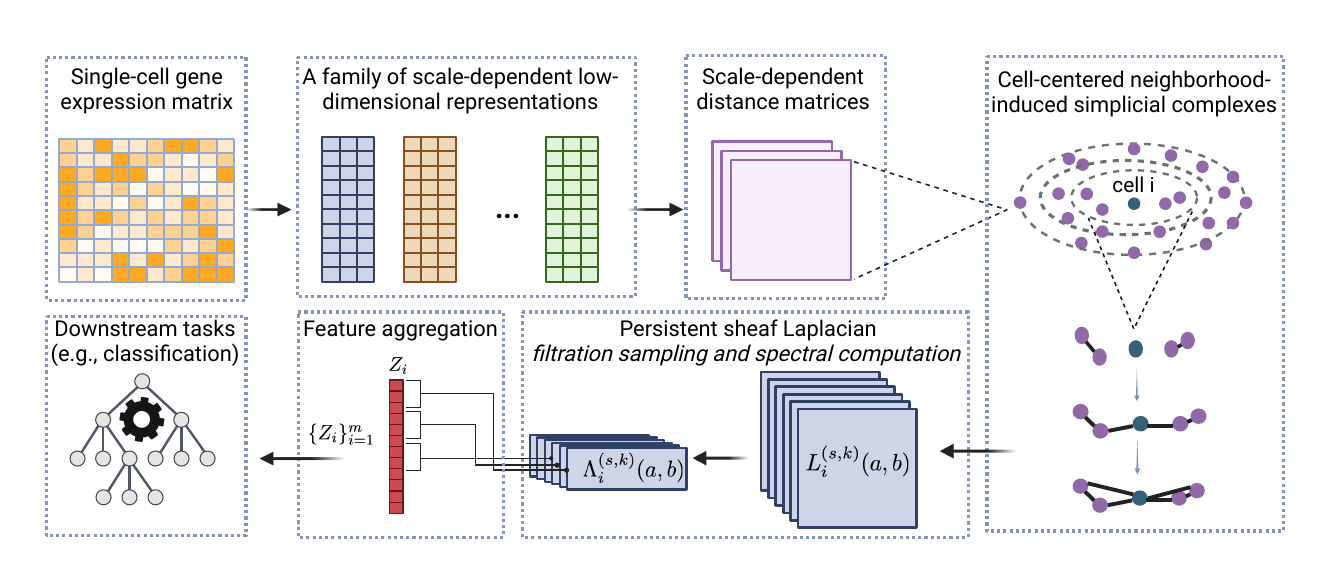}
    \caption{Overview of the proposed cell-centered persistent sheaf Laplacian framework for single-cell data analysis.
Starting from the single-cell gene expression matrix, a family of scale-dependent low-dimensional representations is constructed to capture cellular structures at different resolutions.
Based on these representations, scale-dependent distance matrices are computed to quantify pairwise relationships between cells.
For each cell $i$, a cell-centered neighborhood is identified from the distance matrices, and neighborhood-induced simplicial complexes are constructed to encode local topological structures around the cell.
On these cell-centered simplicial complexes, persistent sheaf Laplacians are computed via filtration sampling and spectral computation, yielding spectral information that characterizes the underlying sheaf-enhanced topological structures.
These spectral features are aggregated to form a feature vector $Z_i$ associated with cell $i$.
Collecting the feature vectors for all cells gives $\{Z_i\}_{i=1}^m$, where $m$ denotes the total number of cells, which serves as input for downstream learning tasks, e.g., classification.}
    \label{fig:framework}
\end{figure}

The framework follows a hierarchical construction composed of several
nested stages.
An overview is provided in Fig.~\ref{fig:framework}.
We summarize the main steps below.

\textbf{Step 1 (A family of scale-dependent low-dimensional representations).}
Starting from the gene expression matrix
\[
X \in \mathbb{R}^{m \times n},
\]
where $m$ denotes the number of cells and $n$ the number of genes,
we construct a family of scale-dependent low-dimensional representations
via dimension reduction.
Let
\[
\mathcal{S} = \{s_1, s_2, \dots, s_p\}
\]
denote a finite set of scale parameters.
We consider a family of mappings
\[
\{\Phi^{(s)} : \mathbb{R}^{m \times n} \rightarrow \mathbb{R}^{m \times d_s}
\mid s \in \mathcal{S}\},
\]
where where each mapping $\Phi^{(s)}$ produces a reduced representation
associated with scale parameter $s$, with $d_s \ll n$.
These mappings are applied to the entire dataset and are generated
independently across scales, without aggregation at this stage.
Applying $\Phi^{(s)}$ yields a low-dimensional representation
\[
Y^{(s)} = \Phi^{(s)}(X) \in \mathbb{R}^{m \times d_s},
\]
where each row of $Y^{(s)}$ corresponds to the reduced feature vector
of a cell under scale $s$.

\textbf{Step 2 (Scale-dependent distance matrices).}
For each scale-dependent representation
\[
Y^{(s)} = \Phi^{(s)}(X) \in \mathbb{R}^{m \times d_s}, \qquad s \in \mathcal{S},
\]
we equip the embedded space with a distance function
$d^{(s)}(\cdot,\cdot)$ and compute the corresponding pairwise distance matrix
\[
D^{(s)} \in \mathbb{R}^{m \times m}, \qquad
D^{(s)}_{ij} = d^{(s)}\!\left(Y^{(s)}_i,\,Y^{(s)}_j\right),
\]
where $Y^{(s)}_i \in \mathbb{R}^{d_s}$ denotes the $i$-th row of $Y^{(s)}$,
corresponding to the reduced feature vector of the $i$-th cell under scale $s$.
The family of distance matrices
\[
\{D^{(s)} \mid s \in \mathcal{S}\}
\]
provides the geometric input for neighborhood construction and filtration
in subsequent steps.

\textbf{Step 3 (Cell-centered neighborhood-induced simplicial complexes).}
Within each scale-dependent representation, local neighborhood structures
are constructed separately for each cell with respect to the corresponding
distance matrix $D^{(s)}$.
For a prescribed set of neighborhood sizes $\mathcal{K}$, we define,
for each cell $i$ and each $k \in \mathcal{K}$, a $k$-nearest-neighbor
neighborhood
\[
\mathcal{N}^{(s,k)}_i \subseteq \{1,\dots,m\},
\]
selected according to the $k$ smallest values of $D^{(s)}_{ij}$ over
$j \neq i$, with the center cell always included, i.e.,
$i \in \mathcal{N}^{(s,k)}_i$.
Each neighborhood $\mathcal{N}^{(s,k)}_i$ induces a local simplicial complex
$K^{(s,k)}_i$ whose vertex set is given by $\mathcal{N}^{(s,k)}_i$.
Simplices in $K^{(s,k)}_i$ are formed according to the simplicial constructions
reviewed in Section~2, ensuring consistency with the face relations of a
simplicial complex.

\textbf{Step 4 (Cell-centered persistent sheaf Laplacian spectra).}
For each cell-centered simplicial complex $K^{(s,k)}_i$, persistent sheaf
Laplacians are computed to extract spectral information.
Following the definitions in Section~2, a filtration
\[
\{K^{(s,k)}_{i,t}\}_{t \in [a,b]}
\]
is generated by sampling filtration parameters over an interval $[a,b]$,
where different choices of the interval endpoints $(a,b)$ correspond to
distinct filtration sampling schemes.
On each filtered complex, the corresponding $(a,b)$-persistent sheaf
Laplacian
\[
L^{(s,k)}_i(a,b)
\]
is constructed, and its eigenvalues are computed.

\textbf{Step 5 (Feature aggregation and downstream tasks).}
For each cell $i$, and for each combination of representation scale $s$,
neighborhood size $k$, and filtration interval $(a,b)$, summary statistics
of the eigenvalues, including the sum, mean, maximum, minimum, and standard
deviation, are extracted to form feature vectors.
All feature vectors associated with a given cell are concatenated to produce
a cell-level feature representation.
These representations are subsequently used for downstream learning tasks,
such as classification, at the dataset level.

\subsection{Distance-Induced Filtrations and Interval Sampling}

The persistent sheaf Laplacian constructions considered in this work
are based on filtrations derived directly from distance information.
In this subsection, we describe how distance matrices induce local
simplicial complexes via Rips constructions, and how filtration
intervals are selected for $(a,b)$-persistent sheaf Laplacian analysis.

\subsubsection{Distance-Induced Rips Complexes}

Let $D \in \mathbb{R}^{m \times m}$ denote a symmetric distance matrix
defined on a local cell-centered neighborhood.
To construct a simplicial complex from $D$, we employ a Vietoris--Rips
construction.
Specifically, given a distance threshold $r \ge 0$, the Rips complex
$R(D,r)$ is defined by including a simplex $\sigma = \{i_0,\dots,i_k\}$
if and only if
\[
D_{i_p i_q} \le r
\qquad
\text{for all } 0 \le p < q \le k.
\]

In our setting, the maximum filtration scale is determined directly
from the observed distances.
Let
\[
r_{\max} = \max \{ D_{ij} \mid D_{ij} > 0 \}.
\]
When no explicit upper bound is specified, $r_{\max}$ is used as the
maximum edge length for the Rips construction.
This choice ensures that the resulting simplicial complex captures
the full range of nontrivial pairwise relationships present in the
local distance matrix, without introducing extraneous scale parameters.

To control computational complexity, the dimension of the Rips complex
is truncated at a prescribed maximum dimension.
Throughout this work, we restrict attention to low-dimensional
simplicial complexes, which is sufficient for the persistent sheaf
Laplacian analysis considered here.

\subsubsection{$(a,b)$-Filtration Intervals}

The construction of persistent sheaf Laplacians in this work is based on
interval sampling over distance-induced filtrations.
Given a local distance matrix $D$ and its maximum nonzero entry
\[
b_{\max} = \max \{ D_{ij} \mid D_{ij} > 0 \},
\]
we generate a collection of filtration intervals
\[
\{[a_\ell, b_\ell]\}_{\ell=1}^L \subseteq [0, b_{\max}],
\]
which partition the filtration range into contiguous subintervals.

Specifically, the interval $[0,b_{\max}]$ is divided into $L$ segments
with uniformly spaced breakpoints,
\[
0 = a_1 < b_1 = a_2 < b_2 = \cdots < a_L < b_L = b_{\max}.
\]
Each interval $[a_\ell, b_\ell]$ isolates geometric information associated
with a specific range of distance scales.
For each interval, an $(a,b)$-persistent sheaf Laplacian is constructed
on the corresponding filtered simplicial complex.

This segment-based sampling strategy provides localized spectral information
across distance scales and is used throughout all experiments in this work.
\subsection{Cell-Centered Cellular Sheaves with Interpretable Restriction Maps}

In this subsection, we introduce cell-centered cellular sheaves equipped
with interpretable restriction maps tailored to single-cell data.
Unlike abstract sheaf constructions, the restriction maps used here are
designed to reflect local geometric and relational structure within each
cell-centered neighborhood.
This design enables the persistent sheaf Laplacian to encode meaningful
local variation while remaining compatible with the general sheaf-theoretic
framework reviewed in Section~2.

\subsubsection{Cell-Centered Cellular Sheaves}

We begin by defining the cellular sheaf structures associated with each
cell-centered simplicial complex constructed in Section~3.1.
The key feature of our framework is that sheaves are not defined globally
on a single complex, but locally and independently for each center cell,
scale parameter, and neighborhood size.

Recall that, for each cell $i$, scale parameter $s \in \mathcal{S}$,
and neighborhood size $k \in \mathcal{K}$, we construct a local simplicial
complex
$
K_i^{(s,k)},
$
whose vertex set consists of the indices of cells in the
$k$-nearest-neighbor neighborhood of cell $i$ under scale $s$.
These local complexes serve as the base spaces on which cellular sheaves
are defined.

\begin{definition}[Cell-Centered Cellular Sheaf]
For each triple $(i,s,k)$, let $K_i^{(s,k)}$ be the associated
cell-centered simplicial complex.
A \emph{cell-centered cellular sheaf}
$
\mathcal{F}_i^{(s,k)}$
on $K_i^{(s,k)}$ consists of the following assignments:
\begin{enumerate}
    \item to each simplex $\sigma \in K_i^{(s,k)}$, a finite-dimensional
    real vector space $\mathcal{F}_i^{(s,k)}(\sigma)$, called the
    \emph{stalk} over $\sigma$;
    \item to each face relation $\sigma \le \tau$ in $K_i^{(s,k)}$, a linear
    map
    \[
    \mathcal{F}_{i,\sigma \le \tau}^{(s,k)} :
    \mathcal{F}_i^{(s,k)}(\sigma) \longrightarrow
    \mathcal{F}_i^{(s,k)}(\tau),
    \]
    called the \emph{restriction map}.
\end{enumerate}
These assignments satisfy the standard sheaf consistency conditions:
\[
\mathcal{F}_{i,\sigma \le \sigma}^{(s,k)} = \mathrm{id},
\qquad
\mathcal{F}_{i,\rho \le \tau}^{(s,k)}
=
\mathcal{F}_{i,\sigma \le \tau}^{(s,k)}
\circ
\mathcal{F}_{i,\rho \le \sigma}^{(s,k)}
\quad
\text{for all } \rho \le \sigma \le \tau.
\]
\end{definition}

Throughout this work, we focus on cellular sheaves with
one-dimensional stalks,
\[
\mathcal{F}_i^{(s,k)}(\sigma) \cong \mathbb{R}
\quad \text{for all } \sigma \in K_i^{(s,k)}.
\]
Under this assumption, each restriction map reduces to multiplication by
a scalar.
The specific choice of these scalars, which encodes local geometric and
relational information, is discussed in the next subsection.

\subsubsection{Interpretable Restriction Maps}

We now define the restriction maps used in the cell-centered cellular
sheaves introduced in the previous subsection.
Throughout this subsection, we consider cellular sheaves with
one-dimensional stalks,
\[
\mathcal{F}_i^{(s,k)}(\sigma) \cong \mathbb{R},
\]
so that each restriction map is necessarily a linear map represented by
multiplication by a scalar.

Let $D^{(s)}$ denote the distance matrix associated with scale $s$ on the
vertex set of the local simplicial complex $K_i^{(s,k)}$.
Let $\eta > 0$ be a scale parameter determined from the local distance
distribution.
Specifically, we set
\[
\eta
=
\operatorname{median}
\left\{
D^{(s)}_{uv} \;\middle|\;
D^{(s)}_{uv} > 0
\right\},
\]
that is, the median of the nonzero pairwise distances on the vertex set of
$K_i^{(s,k)}$.
This choice provides a data-adaptive normalization of distances and
avoids the introduction of additional tuning parameters.

For any pair of vertices $u,v \in K_i^{(s,k)}$, we write
\[
d = D^{(s)}_{uv}
\]
for their pairwise distance under scale $s$.
We define a geometry-based kernel
\[
k(d) = \exp\!\left(-\frac{d^2}{\eta^2}\right),
\]
which assigns larger weights to pairs of vertices that are closer in the
embedded space.

To encode asymmetric roles within each local neighborhood, we associate
to each vertex $u$ a label value
\[
\ell(u) \in \mathcal{L},
\]
where $\mathcal{L}$ denotes a prescribed label space.
The discrepancy between two vertices $u$ and $v$ is measured by a
nonnegative function
\[
m(u,v) \ge 0.
\]
This formulation allows label information to be incorporated in a flexible
and extensible manner.

In the present work, we adopt a simple binary labeling scheme
$\mathcal{L} = \{0,1\}$, where $\ell(u)=0$ denotes the center cell and
$\ell(u)=1$ denotes neighboring cells, and define
\[
m(u,v) = |\ell(u)-\ell(v)|.
\]
This choice yields an interpretable and minimal distinction between the
center cell and its neighbors while remaining fully compatible with the
general framework.
More general label spaces and discrepancy functions can be incorporated
without altering the sheaf-theoretic construction.

Given a nonnegative parameter $\alpha$, we introduce a label modulation
factor
\[
\exp(-\alpha\, m(u,v)),
\]
which downweights restriction maps between vertices with larger label
discrepancy.
When $\alpha = 0$, the restriction maps depend purely on geometric
information.

\begin{definition}[Restriction Maps]
Let $\mathcal{F}_i^{(s,k)}$ be a cell-centered cellular sheaf on
$K_i^{(s,k)}$.
The restriction maps are defined as follows.

\begin{enumerate}
\item \textbf{Identity.}
For every simplex $\sigma \in K_i^{(s,k)}$, the restriction map
\[
\rho_{\sigma \le \sigma} :
\mathcal{F}_i^{(s,k)}(\sigma)
\longrightarrow
\mathcal{F}_i^{(s,k)}(\sigma)
\]
is defined as the identity map,
\[
\rho_{\sigma \le \sigma}(x) = x.
\]

\item \textbf{Codimension--one relations.}
For codimension--one face relations, the restriction maps are given by:
\begin{itemize}
\item \emph{Vertex to edge.}
For a vertex $u$ and an incident edge $(u,v)$,
\[
\rho_{u \le (u,v)}(x)
=
k\!\left(D^{(s)}_{uv}\right)
\exp\!\bigl(-\alpha\, m(u,v)\bigr)\, x.
\]

\item \emph{Edge to triangle.}
For an edge $(u,v)$ and an incident triangle $(u,v,w)$,
\[
\rho_{(u,v) \le (u,v,w)}(x)
=
\frac{1}{2}
\left[
k\!\left(D^{(s)}_{uw}\right)\exp\!\bigl(-\alpha\, m(u,w)\bigr)
+
k\!\left(D^{(s)}_{vw}\right)\exp\!\bigl(-\alpha\, m(v,w)\bigr)
\right] x.
\]
\end{itemize}

\item \textbf{Higher-codimension relations.}
For any face relation $\rho \le \tau$ with
$\dim(\tau)-\dim(\rho) \ge 2$, the restriction map is defined by
composition.
That is, for any chain of simplices
\[
\rho = \sigma_0 \le \sigma_1 \le \cdots \le \sigma_r = \tau,
\qquad
\dim(\sigma_{j+1}) = \dim(\sigma_j)+1,
\]
we set
\[
\rho_{\rho \le \tau}
=
\rho_{\sigma_{r-1} \le \sigma_r}
\circ \cdots \circ
\rho_{\sigma_0 \le \sigma_1}.
\]
\end{enumerate}
\end{definition}

\begin{prop}[Sheaf consistency]
The restriction maps defined above satisfy the axioms of a cellular
sheaf.
In particular, for all simplices $\rho \le \sigma \le \tau$ in
$K_i^{(s,k)}$,
\[
\rho_{\sigma \le \sigma} = \mathrm{id},
\qquad
\rho_{\rho \le \tau}
=
\rho_{\sigma \le \tau} \circ \rho_{\rho \le \sigma}.
\]
\end{prop}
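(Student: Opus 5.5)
The proof splits into the identity axiom, which holds by definition, and the composition axiom, which reduces to one path-independence identity; that identity is the only substantive step, and I expect it to fail for the stated edge-to-triangle maps. Item 1 of the Restriction Maps definition gives $\rho_{\sigma\le\sigma}=\mathrm{id}$ directly.

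For the composition axiom, the plan is to reduce everything to codimension-one steps. Every stalk is $\mathbb{R}$, so every restriction map is multiplication by a scalar. Composites of restriction maps are therefore products of positive reals, which commute. For $\rho\le\sigma\le\tau$, choose saturated chains $\rho=\sigma_0<\cdots<\sigma_a=\sigma$ and $\sigma=\sigma_a<\cdots<\sigma_r=\tau$, each step raising dimension by one. Concatenating them gives a saturated chain from $\rho$ to $\tau$. By item 3 of the definition, $\rho_{\rho\le\tau}$ is the composite along this concatenated chain, which equals $\rho_{\sigma\le\tau}\circ\rho_{\rho\le\sigma}$. The degenerate cases $\rho=\sigma$ or $\sigma=\tau$ reduce to the identity axiom.

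This argument is only valid if $\rho_{\rho\le\tau}$ is well defined, that is, independent of the saturated chain used in item 3. That path-independence is the main step, and the paper's wording ("for any chain") implicitly asserts it. The Rips complexes are truncated at low dimension, so I would handle it by working at most up to triangles. There the only codimension-two relation is a vertex $u$ below a triangle $(u,v,w)$. Write $c_{xy}=k(D^{(s)}_{xy})\exp(-\alpha\, m(x,y))$. The two chains through $(u,v)$ and through $(u,w)$ give the scalars
\[
c_{uv}\cdot\tfrac12\bigl(c_{uw}+c_{vw}\bigr)
\qquad\text{and}\qquad
c_{uw}\cdot\tfrac12\bigl(c_{uv}+c_{vw}\bigr).
\]
Their difference is $\tfrac12 c_{vw}(c_{uv}-c_{uw})$. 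Since $c_{vw}>0$, equality forces $c_{uv}=c_{uw}$, which does not hold for generic distances. If tetrahedra were included, I would run the analogous diamond check for edges below tetrahedra and the longer chain check for vertices below tetrahedra.

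So the proof as outlined goes through exactly when this diamond identity holds. The whole proposition therefore hinges on the path-independence step. For the stated averaging formula, my check suggests the identity fails whenever $c_{uv}\neq c_{uw}$. I do not yet have an argument that removes this obstruction while keeping the maps exactly as defined, so I would first verify the diamond computation with explicit numbers before trying to complete the proof.
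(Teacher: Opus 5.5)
Your diagnosis is correct, and the obstruction you found is a gap in the paper's own proof rather than in your plan. The paper argues exactly as in your concatenation step: higher-codimension maps are defined as composites, ``thus'' $\rho_{\rho\le\tau}=\rho_{\sigma\le\tau}\circ\rho_{\rho\le\sigma}$. It never checks that the composite in item~3 is independent of the chain.

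That independence is forced by the axiom itself. Applying it to $u\le(u,v)\le(u,v,w)$ and to $u\le(u,w)\le(u,v,w)$ requires both composites to equal $\rho_{u\le(u,v,w)}$, whatever value one assigns to the latter. Your computation, which is right, shows they differ by $\tfrac12 c_{vw}(c_{uv}-c_{uw})$.

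You can drop the hedge; no numerical check is needed. For every $\eta>0$ the kernel $k(d)=\exp(-d^2/\eta^2)$ is strictly decreasing on $[0,\infty)$, and $m\equiv 0$ on any triangle avoiding the center cell. So for every $\alpha\ge 0$, any triangle $(u,v,w)$ of $K_i^{(s,k)}$ that avoids the center and has $D^{(s)}_{uv}\neq D^{(s)}_{uw}$ violates the composition axiom at $u$. Since $K_i^{(s,k)}$ is the full $2$-skeleton on $k+1\ge 6$ vertices (Rips at $r_{\max}$ with $d_{\max}=2$), generic data always contain such triangles. The proposition is false for the maps as defined, so there is no missing argument to find.

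The same quantity is, up to sign, the coefficient of $x_u$ in $(\delta_1\delta_0 x)_{[u,v,w]}$. Hence the identity $\delta_1\circ\delta_0=0$ stated in Section~2 also fails, and the sequence there is not a cochain complex for these maps.

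To obtain a true statement, keep the vertex-to-edge scalars $c_{uv}$. Then choose edge-to-triangle scalars for which $\rho_{e\le t}\,c_e$ does not depend on the edge $e$ of $t$. For example, replace the average $\tfrac12(c_{uw}+c_{vw})$ by the product $c_{uw}c_{vw}$, so that every vertex-to-triangle composite equals $c_{uv}c_{uw}c_{vw}$. With $d_{\max}=2$, the vertex--triangle diamond is the only codimension-two relation. Path-independence then holds, and your concatenation argument, together with item~1, proves the proposition verbatim.
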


\begin{proof}
By construction, each restriction map is a scalar multiple of the identity
on $\mathbb{R}$ and hence linear.
The identity condition holds by definition.

For the composition property, restriction maps on higher-codimension face
relations are defined as compositions of codimension--one restriction
maps.
Thus, for $\rho \le \sigma \le \tau$ and any
$x \in \mathcal{F}_i^{(s,k)}(\rho) \cong \mathbb{R}$,
\[
\rho_{\sigma \le \tau}\!\left(\rho_{\rho \le \sigma}(x)\right)
=
\rho_{\rho \le \tau}(x),
\]
which establishes the required consistency.
\end{proof}

\begin{remark}[Interpretability]
The restriction maps combine geometric proximity and vertex label
information in a transparent manner.
The kernel term emphasizes locally coherent neighborhoods, while the
label modulation controls the relative influence of the center cell and
its neighbors.
The parameter $\alpha$ provides a continuous interpolation between a
purely geometric sheaf ($\alpha=0$) and a center-aware sheaf structure
($\alpha>0$), allowing the effect of label information to be tuned
explicitly.
\end{remark}


\subsection{Scaling Invariance of the Persistent Sheaf Laplacian}

We record a simple but important scaling invariance of the proposed
construction.
This property clarifies that the spectral information extracted by
persistent sheaf Laplacians is governed primarily by the \emph{relative
pattern} of restriction maps, rather than their absolute magnitudes.
Such an invariance is desirable in the present framework, where
restriction weights are constructed from data-adaptive kernels and may
exhibit unavoidable global rescaling across different local neighborhoods
and representation scales.

\begin{prop}[Scaling invariance of PSL eigenvectors]
\label{prop:scaling-invariance}
Fix a cell-centered simplicial complex $K_i^{(s,k)}$ and a filtration
interval $(a,b)$.
Let $\mathcal{F}$ be a cellular sheaf on $K_i^{(s,k)}$ with one-dimensional
stalks, and let $\{L_r(a,b)\}_{r\ge 0}$ denote the corresponding
$(a,b)$-persistent sheaf Laplacian operators.
If all restriction maps are uniformly rescaled by a constant factor
$c>0$, namely
\[
\rho_{\sigma\le\tau} \;\mapsto\; c\,\rho_{\sigma\le\tau}
\qquad \text{for all face relations } \sigma\le\tau,
\]
then for each degree $r$ the Laplacian rescales as
\[
L_r(a,b) \;\mapsto\; c^2\,L_r(a,b).
\]
Consequently, the eigenspaces (and hence eigenvectors, up to basis choice
within eigenspaces) of $L_r(a,b)$ are unchanged, while eigenvalues are
scaled by the factor $c^2$.
\end{prop}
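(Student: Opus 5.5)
The plan is to reduce everything to how the coboundary operators transform, since each $(a,b)$-persistent sheaf Laplacian is a sum of two terms, each quadratic in a coboundary-type operator. One interpretive point comes first. The identity maps $\rho_{\sigma\le\sigma}$ cannot be rescaled without violating the sheaf axioms. Moreover, only codimension-one restriction maps enter the coboundary $\delta_r$, which is assembled from the components $\pm\rho_{\sigma\le\tau}$ with $\dim\tau=\dim\sigma+1$. I will therefore read the hypothesis as rescaling the codimension-one maps by $c$. Higher-codimension maps are then redefined by composition, as in the definition of the restriction maps, so that the proposition on sheaf consistency still applies. They pick up powers of $c$, but this is irrelevant to the Laplacian. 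Since the stalks are one-dimensional and the orientation signs are unchanged, each matrix entry of $\delta_r^{t}$ is multiplied by $c$. Hence $\delta_r^{t}\mapsto c\,\delta_r^{t}$ for every filtration value $t$ and every degree $r$.

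Next I treat the persistent coboundary $\delta^{a,b}_r$, which I expect to be the only step needing care. Following the construction in \cite{wei2025persistent}, it is obtained by restricting $\delta^{b}_r$ to a subspace of cochains on $X_b$. That subspace is cut out by the condition that the coboundary lands in the cochain space of $X_a$ (equivalently, in its dual formulation, by a kernel/projection condition). This is then composed with the inclusion and projection maps between $C^\bullet(X_a)$ and $C^\bullet(X_b)$. The key observation is the following:
\begin{itemize}
\item the defining subspace is of the form $\{x : P\,\delta^{b}_r x = 0\}$ for a coordinate projection $P$ independent of the sheaf data, so replacing $\delta^{b}_r$ by $c\,\delta^{b}_r$ with $c\neq 0$ leaves it unchanged;
\item the inclusions and orthogonal projections are coordinate maps and do not depend on the restriction maps.
\end{itemize}
Therefore $\delta^{a,b}_r\mapsto c\,\delta^{a,b}_r$. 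If the construction instead uses a pseudo-inverse or Schur-complement form, I would verify homogeneity of degree one directly from that formula in the same way.

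With both operators scaling linearly, the formula
\[
L^{a,b}_r=\delta^{a,b}_r(\delta^{a,b}_r)^{\top}+\delta^{a}_{r-1}(\delta^{a}_{r-1})^{\top}
\]
gives $L^{a,b}_r\mapsto c^2L^{a,b}_r$, because transposition commutes with real scalar multiplication. For $r=0$ the second term is absent and the conclusion is the same.

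Finally, since $c^2>0$, we have $L v=\lambda v$ if and only if $(c^2L)v=(c^2\lambda)v$. The map $\lambda\mapsto c^2\lambda$ is a bijection of spectra that preserves multiplicities. Each eigenspace $\ker(L-\lambda I)$ equals $\ker(c^2L-c^2\lambda I)$, which yields the stated invariance of eigenspaces and the $c^2$ scaling of the eigenvalues.
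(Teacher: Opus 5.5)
Your proposal is correct and follows the same route as the paper. Rescaling the codimension-one maps gives $\delta\mapsto c\,\delta$, hence $L\mapsto c^2L$, and a positive multiple of a symmetric matrix has the same eigenspaces with eigenvalues scaled by $c^2$. Your restriction of the hypothesis to codimension-one maps matches what the paper's proof actually uses. Your observation that the subspace defining $\delta^{a,b}_r$ is unchanged under a nonzero rescaling justifies the step the paper states without argument, namely that ``the same scaling relation holds'' for $L_r(a,b)$.
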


\begin{proof}
For each fixed filtration parameter $t$ in the interval $(a,b)$, the
$c$-rescaling multiplies every codimension--one restriction map by $c$.
Since the coboundary operator $\delta_r(t)$ is assembled linearly from
these codimension--one restriction maps (up to orientation signs), it
follows that
\[
\delta_r(t) \;\mapsto\; c\,\delta_r(t)
\qquad \text{for all } r.
\]
Therefore, the associated sheaf Laplacian at scale $t$ satisfies
\[
L_r(t)
=
\delta_r(t)^{\top}\delta_r(t)
+
\delta_{r-1}(t)\delta_{r-1}(t)^{\top}
\;\mapsto\;
c^2\,\delta_r(t)^{\top}\delta_r(t)
+
c^2\,\delta_{r-1}(t)\delta_{r-1}(t)^{\top}
=
c^2\,L_r(t).
\]
The same scaling relation holds for the $(a,b)$-persistent sheaf Laplacian
operator $L_r(a,b)$ obtained from the filtration over $t\in(a,b)$.
Finally, multiplying a symmetric matrix by a positive scalar rescales its
eigenvalues by that scalar and leaves its eigenspaces unchanged.
\end{proof}

\begin{remark}[Implications for the proposed framework]
Proposition~\ref{prop:scaling-invariance} has several practical
implications for the proposed hierarchical framework.
First, it shows that global rescaling of restriction weights—arising, for
example, from differences in local distance distributions or numerical
normalization—does not alter the spectral directions captured by the
persistent sheaf Laplacian.
Second, this invariance is consistent with the use of data-adaptive kernel
scales, such as the median-based choice of $\eta$, which may induce
different overall weight magnitudes across cell-centered neighborhoods.
Finally, the result explains why the framework is primarily sensitive to
relative geometric and relational structure, and helps interpret the
robustness of the extracted features to certain parameter variations
observed in the experiments.
\end{remark}

\subsection{Algorithm Summary}

We conclude this section with an algorithmic summary of the proposed
hierarchical sheaf spectral framework.
Given a single-cell gene expression matrix, the method constructs a family
of scale-dependent representations, builds cell-centered neighborhood
complexes, computes $(a,b)$-persistent sheaf Laplacian spectra on the
resulting filtrations, and aggregates spectral statistics into
cell-level feature vectors for downstream learning tasks.
Algorithm~\ref{alg:hsse} summarizes the proposed hierarchical sheaf
spectral embedding framework.
\begin{algorithm}[ht]
\caption{Hierarchical Sheaf Spectral Embedding (HSSE)}
\label{alg:hsse}
\hrule\vspace{0.3em}
\begin{algorithmic}[1]
    \State \textbf{Input:} gene expression matrix $X\in\mathbb{R}^{m\times n}$;
    scales $\mathcal{S}=\{s_1,\dots,s_p\}$;
    neighborhood sizes $\mathcal{K}=\{k_1,\dots,k_q\}$;
    filtration intervals $\mathcal{I}=\{(a_1,b_1),\dots,(a_r,b_r)\}$ (segments);
    label strength $\alpha\ge 0$;
    maximal simplex dimension $d_{\max}=2$.
    \State \textbf{Output:} feature matrix $Z\in\mathbb{R}^{D\times m}$ with column $\{Z_i\}_{i=1}^m$.

    \State Initialize $Z_i \gets [\,]$ for all $i=1,\dots,m$ \Comment{empty feature lists}
    \ForAll{$s\in\mathcal{S}$}
        \State Compute scale-dependent representation $Y^{(s)} \gets \Phi^{(s)}(X)\in\mathbb{R}^{m\times d_s}$
        \State Compute distance matrix $D^{(s)}\in\mathbb{R}^{m\times m}$ on rows of $Y^{(s)}$
        \For{$i=1$ to $m$}
            \ForAll{$k\in\mathcal{K}$}
                \State Construct $k$-NN index set $\mathcal{N}^{(s,k)}_i$ using $D^{(s)}$ (ensure $i\in\mathcal{N}^{(s,k)}_i$)
                \State Extract local submatrix $D^{(s,k)}_i \gets D^{(s)}[\mathcal{N}^{(s,k)}_i,\mathcal{N}^{(s,k)}_i]$
                \State Set $r_{\max}\gets \max\{D^{(s,k)}_i(u,v):D^{(s,k)}_i(u,v)>0\}$ \Comment{max nonzero distance}
                \State Build a Rips complex $K^{(s,k)}_i$ from $D^{(s,k)}_i$ with max edge length $r_{\max}$ and dimension $d_{\max}$
                \State Set $\eta \gets \operatorname{median}\{D^{(s,k)}_i(u,v):D^{(s,k)}_i(u,v)>0\}$ \Comment{kernel scale}
                \State Assign center--neighbor labels $\ell$ on vertices of $K^{(s,k)}_i$
                \State Define restriction maps $\rho$ using $k(d)=\exp(-d^2/\eta^2)$ and label modulation $\exp(-\alpha\,m(u,v))$
                \ForAll{$(a,b)\in\mathcal{I}$}
                    \State Compute PSL spectra $\Lambda^{(s,k)}_i(a,b)$ on $K^{(s,k)}_i$ over interval $(a,b)$
                    \State Compute summary statistics $\psi(\Lambda)\gets(\mathrm{sum},\mathrm{mean},\mathrm{max},\mathrm{min},\mathrm{std})$
                    \State Append $\psi(\Lambda^{(s,k)}_i(a,b))$ to $Z_i$
                \EndFor
            \EndFor
        \EndFor
    \EndFor
    \State \Return $Z$
\end{algorithmic}
\vspace{0.3em}\hrule
\end{algorithm}

\begin{remark} 
The feature matrix $Z$ can be used with standard supervised or
semi-supervised learning methods.
In this work, we focus on classification and evaluate performance across
multiple datasets using standard metrics (Section~4). The distance matrix in step 6 is computed using the chordal distance\cite{edelman1998geometry}.
\end{remark}

\section{Experiments}

In this section, we evaluate the proposed HSSE framework through a series
of classification experiments on benchmark single-cell RNA-seq datasets.
The experiments are designed to assess whether the hierarchical sheaf
spectral features improve discriminative performance compared with
existing multiscale geometric representations and commonly used
dimensionality reduction methods.
All methods are evaluated using the same downstream classifier, namely
gradient boosting decision trees (GBDT), with identical hyperparameter
settings across all datasets to ensure a controlled and fair comparison.
\subsection{Datasets and Experimental Setup}

We evaluate the proposed Hierarchical Sheaf Spectral Embedding (HSSE)
framework on twelve benchmark single-cell RNA sequencing datasets that
were curated and analyzed in Feng \emph{et al.}~\cite{feng2024multiscale}.
These datasets are widely used in the literature and cover a broad range
of biological settings, including both human and mouse data, varying
numbers of cells, genes, and annotated cell types.

The datasets span sample sizes from a few hundred to several thousand
cells, gene dimensions on the order of $10^4$--$10^5$, and between 5 and
14 annotated cell types.
A summary of the datasets, including GEO accession IDs, references,
numbers of samples, genes, and cell types, is provided in
Table~\ref{tab:datasets}.
More detailed information on dataset characteristics and preprocessing
can be found in~\cite{feng2024multiscale} and the original references.

\begin{table}[!htbp]
\centering
\caption{Summary of benchmark single-cell RNA-seq datasets used in the experiments. 
Each dataset includes its GEO accession number, reference, number of annotated cell types, 
and the numbers of samples (cells) and genes.}
\label{tab:datasets}
\begin{tabular}{lcccc}
\hline
\textbf{GEO accession} & \textbf{Reference} & \textbf{Cell types} & \textbf{Samples} & \textbf{Genes} \\
\hline
GSE45719 & Deng~\cite{deng2014single} & 8 & 300 & 22\,431 \\
GSE67835 & Dramanis~\cite{darmanis2015survey} & 8 & 420 & 22\,084 \\
GSE75748cell & Chu~\cite{chu2016single} & 7 & 1\,018 & 19\,189 \\
GSE75748time & Chu~\cite{chu2016single} & 6 & 758 & 19\,189 \\
GSE82187 & Gokce~\cite{gokce2016single} & 10 & 705 & 18\,840 \\
GSE84133human1 & Baron~\cite{baron2016single} & 14 & 1\,895 & 20\,125 \\
GSE84133human2 & Baron~\cite{baron2016single} & 14 & 1\,702 & 20\,125 \\
GSE84133human3 & Baron~\cite{baron2016single} & 14 & 3\,605 & 20\,125 \\
GSE84133human4 & Baron~\cite{baron2016single} & 14 & 1\,275 & 20\,125 \\
GSE84133mouse1 & Baron~\cite{baron2016single} & 13 & 822 & 14\,878 \\
GSE84133mouse2 & Baron~\cite{baron2016single} & 13 & 1\,064 & 14\,878 \\
GSE94820 & Villani~\cite{villani2017single} & 5 & 1\,140 & 26\,593 \\
\hline
\end{tabular}
\end{table}

To ensure a fair and controlled comparison, we adopt the same dataset
selection, normalization schemes, and evaluation protocol as
in~\cite{feng2024multiscale}.
Gene expression matrices are normalized using the original normalization
procedures reported for each dataset.
Cell-type labels are used exclusively for evaluation and are not involved
in the construction of HSSE features or any intermediate representations.

For all datasets, UMAP is used to construct the initial low-dimensional
representations from the normalized gene expression matrices.
As a standard preprocessing step for UMAP, principal component analysis
(PCA) is applied prior to UMAP to reduce noise and computational cost.
Following~\cite{wang2025multiscale}, the target UMAP dimension is chosen
according to dataset size:
datasets with fewer than 400 cells are reduced to 15 dimensions,
datasets with 400 to 1200 cells are reduced to 20 dimensions,
and datasets with more than 1200 cells are reduced to 50 dimensions.
The same dimensionality settings are applied across all methods for
consistency.

Based on these low-dimensional representations, HSSE constructs a family
of scale-dependent representations for each dataset.
Specifically, five representation scales are selected using the
power-based sampling strategy described
in~\cite{wang2025multiscale}.
Neighborhood sizes are fixed across all datasets and scales, with
\[
\mathcal{K} = \{5, 10, 15, 20, 30, 40, 50, 60, 70, 80\}.
\]

We compare HSSE against a collection of baseline representation methods
reported in Feng \emph{et al.}~\cite{feng2024multiscale}, including PCA,
UMAP, t-SNE, and nonnegative matrix factorization (NMF).
For a fair comparison, each baseline method produces embeddings with the
same target dimensionality as the initial low-dimensional representations
used in HSSE.
All baseline embeddings are evaluated using the same downstream classifier
and training protocol.

Following~\cite{feng2024multiscale}, gradient boosting decision trees
(GBDT) are employed as the downstream classifier for all feature
representations.
The GBDT hyperparameters are fixed across all datasets and experiments:
number of estimators $=2000$, maximum tree depth $=7$, minimum samples per
split $=5$, learning rate $=0.002$, subsample ratio $=0.8$, and the number
of features considered at each split set to the square root of the input
feature dimension.
No dataset-specific tuning is performed.

Performance is evaluated using five-fold cross-validation.
To reduce the effect of randomness, all experiments are repeated with
different random seeds, and reported results correspond to averaged
performance across runs.
We report standard multi-class classification metrics, including
Macro-Recall, Macro-F1 score, and macro-AUC, which are widely used for
imbalanced multi-class problems and compute performance on a per-class
basis before averaging
\cite{lewis1991evaluating, sokolova2009systematic, hand2001simple}.
A summary of parameter configurations for all datasets is provided in Table S1 (Supporting Information, Section S1), and formal definitions of the evaluation metrics are given in Section S3 of the Supporting Information.

\subsection{Quantitative Results and Comparison}

We quantitatively evaluate the proposed Hierarchical Sheaf Spectral
Embedding (HSSE) by comparing it with the multiscale differential
geometry (MDG) framework of Feng \emph{et al.}~\cite{feng2024multiscale}
and several widely used baseline representations, including PCA, UMAP,
t-SNE, and nonnegative matrix factorization (NMF).
All methods are evaluated under the same downstream classification
setting described in Section~4.1.

Tables~\ref{tab:macro_f1}, \ref{tab:macro_recall}, and
\ref{tab:macro_auc} report the classification performance measured by
Macro-F1, Macro-Recall (balanced accuracy), and Macro-AUC (one-vs-rest),
respectively.
For each metric, results are shown for all twelve datasets, followed by
the average performance across datasets.
\begin{table}[!htbp]
\centering
\caption{Macro-F1 comparison across 12 benchmark single-cell RNA-seq datasets.
HSSE denotes the proposed Hierarchical Sheaf Spectral Embedding.
All baseline results (MDG, PCA, UMAP, NMF, and tSNE) are     taken from Feng \emph{et al.}~\cite{feng2024multiscale}.
For each dataset, the best-performing method is highlighted in bold.}
\label{tab:macro_f1}
\begin{tabular}{lcccccc}
\hline
Dataset & HSSE & MDG  \hspace{-0.5em}\textsuperscript{\cite{feng2024multiscale}} & PCA  \hspace{-0.5em}\textsuperscript{\cite{feng2024multiscale}}  & UMAP  \hspace{-0.5em}\textsuperscript{\cite{feng2024multiscale}}  & NMF  \hspace{-0.5em}\textsuperscript{\cite{feng2024multiscale}}  & tSNE  \hspace{-0.5em}\textsuperscript{\cite{feng2024multiscale}}  \\
\hline
GSE45719 & 0.935 & \textbf{0.959}   & 0.770   & 0.842   & 0.739   & 0.526   \\
GSE67835 & \textbf{0.949} & 0.926   & 0.879   & 0.929   & 0.799   & 0.707   \\
GSE75748cell & \textbf{0.990} & 0.969   & 0.885   & 0.936   & 0.846   & 0.814   \\
GSE75748time & \textbf{0.941} & 0.929   & 0.901   & 0.871   & 0.872   & 0.708   \\
GSE82187 & \textbf{0.986} & 0.950   & 0.860   & 0.968   & 0.852   & 0.815   \\
GSE84133human1 & \textbf{0.961} & 0.949   & 0.636   & 0.946   & 0.622   & 0.895   \\
GSE84133human2 & \textbf{0.967} & 0.950   & 0.798   & 0.952   & 0.710   & 0.931   \\
GSE84133human3 & 0.900 & \textbf{0.934}   & 0.561   & 0.927   & 0.538   & 0.924   \\
GSE84133human4 & 0.981 & \textbf{0.986}   & 0.880   & 0.984   & 0.888   & 0.926   \\
GSE84133mouse1 & \textbf{0.979} & 0.955   & 0.834   & 0.954   & 0.722   & 0.794   \\
GSE84133mouse2 & \textbf{0.969} & 0.936   & 0.682   & 0.948   & 0.710   & 0.847   \\
GSE94820 & \textbf{0.935} & 0.925   & 0.896   & 0.926   & 0.893   & 0.855   \\
\hline
Average & \textbf{0.958} & 0.947  & 0.799  & 0.932  & 0.766  & 0.812  \\
\hline
\end{tabular}
\end{table}

 \begin{table}[!htbp]
\centering
\caption{Macro-Recall comparison across 12 benchmark single-cell RNA-seq datasets.
All baseline results (MDG, PCA, UMAP, NMF, and tSNE) are taken from   Feng \emph{et al.}~\cite{feng2024multiscale}.
Boldface indicates the highest Macro-Recall achieved for each dataset.}
\label{tab:macro_recall}
\begin{tabular}{lcccccc}
\hline
Dataset & HSSE & MDG   \hspace{-0.5em}\textsuperscript{\cite{feng2024multiscale}}  & PCA   \hspace{-0.5em}\textsuperscript{\cite{feng2024multiscale}} & UMAP  \hspace{-0.5em}\textsuperscript{\cite{feng2024multiscale}}  & NMF  \hspace{-0.5em}\textsuperscript{\cite{feng2024multiscale}}  & tSNE  \hspace{-0.5em}\textsuperscript{\cite{feng2024multiscale}}  \\
\hline
GSE45719 & 0.937 & \textbf{0.962}   & 0.780   & 0.848   & 0.747   & 0.532   \\
GSE67835 & \textbf{0.940} & 0.920   & 0.876   & 0.921   & 0.800   & 0.699   \\
GSE75748cell & \textbf{0.990} & 0.968   & 0.887   & 0.934   & 0.848   & 0.811   \\
GSE75748time & \textbf{0.941} & 0.930   & 0.901   & 0.870   & 0.873   & 0.708   \\
GSE82187 & \textbf{0.985} & 0.955   & 0.863   & 0.966   & 0.857   & 0.807   \\
GSE84133human1 & \textbf{0.955} & 0.945   & 0.644   & 0.938   & 0.626   & 0.883   \\
GSE84133human2 & \textbf{0.959} & 0.950   & 0.805   & 0.951   & 0.715   & 0.916   \\
GSE84133human3 & 0.890 & \textbf{0.935}   & 0.575   & 0.918   & 0.550   & 0.909   \\
GSE84133human4 & 0.986 & \textbf{0.988}   & 0.884   & 0.984   & 0.895   & 0.924   \\
GSE84133mouse1 & \textbf{0.975} & 0.952   & 0.826   & 0.949   & 0.719   & 0.783   \\
GSE84133mouse2 & \textbf{0.967} & 0.935   & 0.684   & 0.940   & 0.716   & 0.839   \\
GSE94820 & \textbf{0.930} & 0.925   & 0.898   & 0.925   & 0.895   & 0.857   \\
\hline
Average & \textbf{0.954} & 0.947  & 0.802  & 0.929  & 0.770  & 0.806  \\
\hline
\end{tabular}
\end{table}

\begin{table}[!htbp]
\centering
\caption{Macro-AUC  comparison across 12 benchmark single-cell RNA-seq datasets.
All baseline results (MDG, PCA, UMAP, NMF, and tSNE) are taken  from Feng \emph{et al.}~\cite{feng2024multiscale}.
For each dataset, the best result is shown in bold.}
\label{tab:macro_auc}
\begin{tabular}{lcccccc}
\hline
Dataset & HSSE & MDG   \hspace{-0.5em}\textsuperscript{\cite{feng2024multiscale}}  & PCA   \hspace{-0.5em}\textsuperscript{\cite{feng2024multiscale}}  & UMAP  \hspace{-0.5em}\textsuperscript{\cite{feng2024multiscale}}  & NMF  \hspace{-0.5em}\textsuperscript{\cite{feng2024multiscale}}  & tSNE  \hspace{-0.5em}\textsuperscript{\cite{feng2024multiscale}}  \\
\hline
GSE45719 & 0.991 & \textbf{0.994}   & 0.950   & 0.965   & 0.944   & 0.831   \\
GSE67835 & 0.986 & 0.986   & 0.984   & \textbf{0.992}   & 0.975   & 0.920   \\
GSE75748cell & \textbf{0.998} & 0.996   & 0.983   & 0.992   & 0.974   & 0.965   \\
GSE75748time & 0.989 & \textbf{0.990}   & 0.979   & 0.971   & 0.980   & 0.927   \\
GSE82187 & \textbf{1.000} & 0.995   & 0.992   & 0.993   & 0.988   & 0.961   \\
GSE84133human1 & \textbf{0.989} & 0.987   & 0.951   & 0.979   & 0.944   & 0.980   \\
GSE84133human2 & \textbf{0.994} & 0.991   & 0.984   & 0.989   & 0.977   & 0.988   \\
GSE84133human3 & 0.980 & 0.989   & 0.951   & 0.989   & 0.940   & \textbf{0.990}   \\
GSE84133human4 & \textbf{0.998} & 0.996   & 0.991   & \textbf{0.998}   & 0.993   & 0.990   \\
GSE84133mouse1 & \textbf{0.999} & 0.987   & 0.972   & 0.995   & 0.954   & 0.961   \\
GSE84133mouse2 & \textbf{0.993} & 0.989   & 0.967   & 0.992   & 0.967   & 0.979   \\
GSE94820 & \textbf{0.990} & 0.986   & 0.982   & 0.992   & 0.977   & 0.976   \\
\hline
Average & \textbf{0.992} & 0.991  & 0.974  & 0.987 & 0.968  & 0.956  \\
\hline
\end{tabular}
\end{table}

\noindent
\textbf{Comparison with MDG.}
Tables~2--4 summarize the absolute classification performance of HSSE,
MDG, and baseline methods across all datasets.
To further highlight the relative behavior of HSSE compared with MDG,
Figure~\ref{fig:hsse_mdg_dumbbell} visualizes the dataset-wise performance
differences using paired dumbbell plots for Macro-F1, Macro-Recall, and
Macro-AUC.

\begin{figure*}[!ht]
    \centering
    \hspace*{-0.5cm}
    \includegraphics[width=1.05\textwidth]{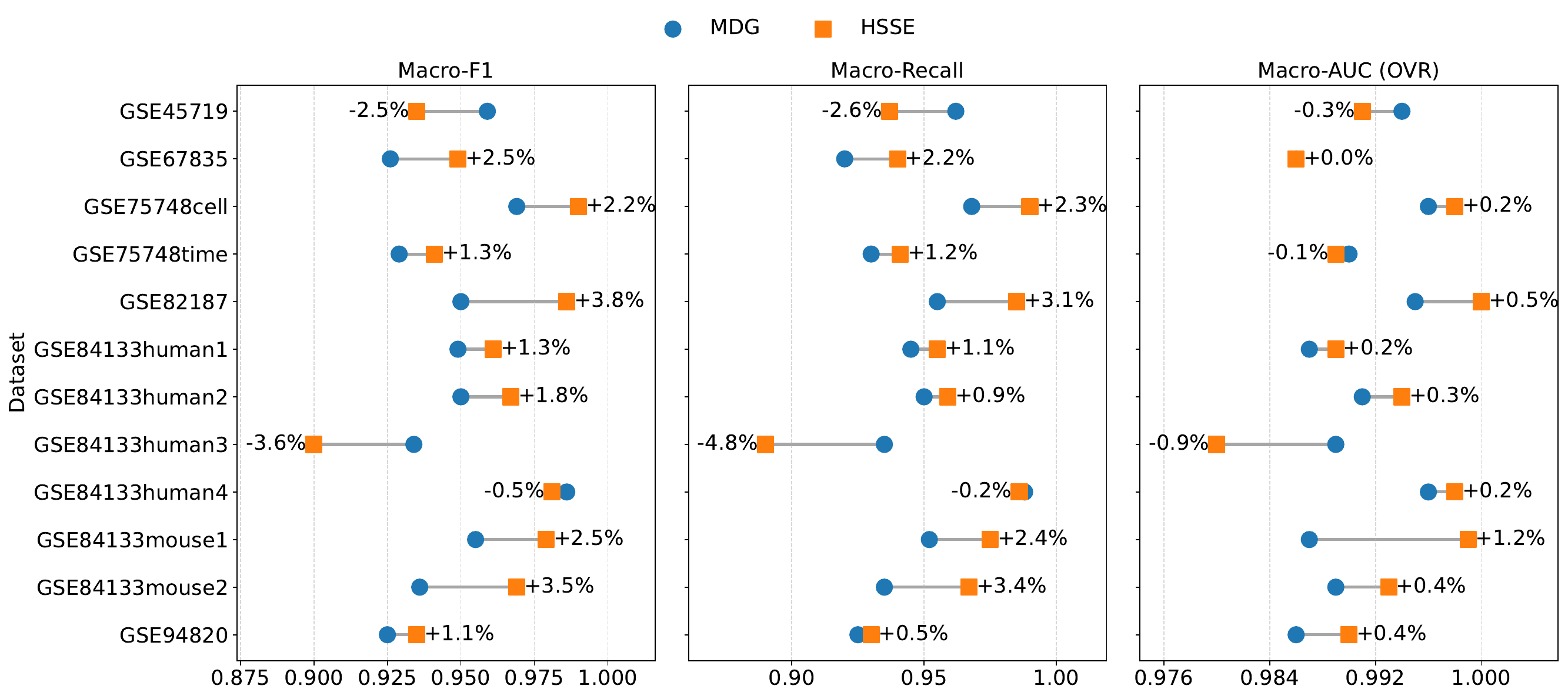}
    \caption{
    Dataset-wise performance differences between HSSE and MDG across three evaluation metrics.
    Each panel shows paired results for HSSE and MDG on the same dataset using a dumbbell plot:
    (left) Macro-F1, (middle) Macro-Recall, and (right) Macro-AUC (OVR).
    Horizontal line segments connect MDG (circle) and HSSE (square), and the percentage annotation
    indicates the relative change of HSSE compared to MDG.
    Positive values denote improvements achieved by HSSE, while negative values indicate datasets
    where MDG performs slightly better.
    }
    \label{fig:hsse_mdg_dumbbell}
\end{figure*}

As shown in Figure~\ref{fig:hsse_mdg_dumbbell}, HSSE achieves consistent
performance gains over MDG on a majority of datasets, particularly for
Macro-F1 and Macro-Recall.
Notable improvements are observed on GSE67835, GSE75748cell, GSE82187,
and both mouse pancreatic datasets (GSE84133mouse1 and
GSE84133mouse2), where HSSE improves balanced classification accuracy by
approximately 2\%--4\%.
These gains suggest that incorporating cell-centered persistent sheaf
spectral information provides complementary discriminative power beyond
the multiscale geometric embeddings employed by MDG.

For Macro-AUC, HSSE remains highly competitive with MDG across all
datasets and exhibits small but consistent improvements on most tasks,
indicating robust class separability under the one-vs-rest setting.
Only a small number of datasets, most notably GSE84133human3, show a
slight performance advantage for MDG.
This dataset is the largest and most heterogeneous human pancreatic
dataset considered, and the reduced performance gap suggests that the
benefit of local sheaf-based constructions may diminish when global
structure dominates.
This behavior is discussed further in Section~5.

 \noindent
\textbf{Comparison with Classical Methods.}
Compared with classical dimensionality reduction techniques, including
PCA, UMAP, NMF, and t-SNE, HSSE demonstrates clear and consistent
advantages across all three evaluation metrics.
As shown in Tables~2--4 and Fig.~\ref{fig:hsse_classical_gain}, PCA and NMF
generally underperform HSSE by a substantial margin in both Macro-F1 and
Macro-Recall, reflecting their limited ability to capture nonlinear and
multiscale structures inherent in single-cell RNA-seq data.

UMAP and t-SNE achieve strong performance on several datasets, and in a
few cases approach or exceed MDG.
However, their performance exhibits noticeably higher variability across
datasets, particularly in terms of Macro-F1 and Macro-Recall.
In contrast, HSSE maintains more stable and consistently positive gains
over all classical methods across datasets with varying sizes, gene
dimensions, and degrees of cell-type heterogeneity.
This stability is reflected in HSSE achieving the highest average
Macro-F1 and Macro-Recall scores among all compared methods.

For Macro-AUC, all methods achieve relatively high values, indicating
strong overall class separability across datasets.
Nevertheless, HSSE consistently ranks among the top-performing
approaches and attains the highest average Macro-AUC, as illustrated in
Fig.~\ref{fig:hsse_classical_gain}, demonstrating that HSSE not only
improves balanced classification performance but also preserves robust
discriminative power under the one-vs-rest evaluation setting.

\begin{figure*}[!ht]
    \centering
    \includegraphics[width=\textwidth]{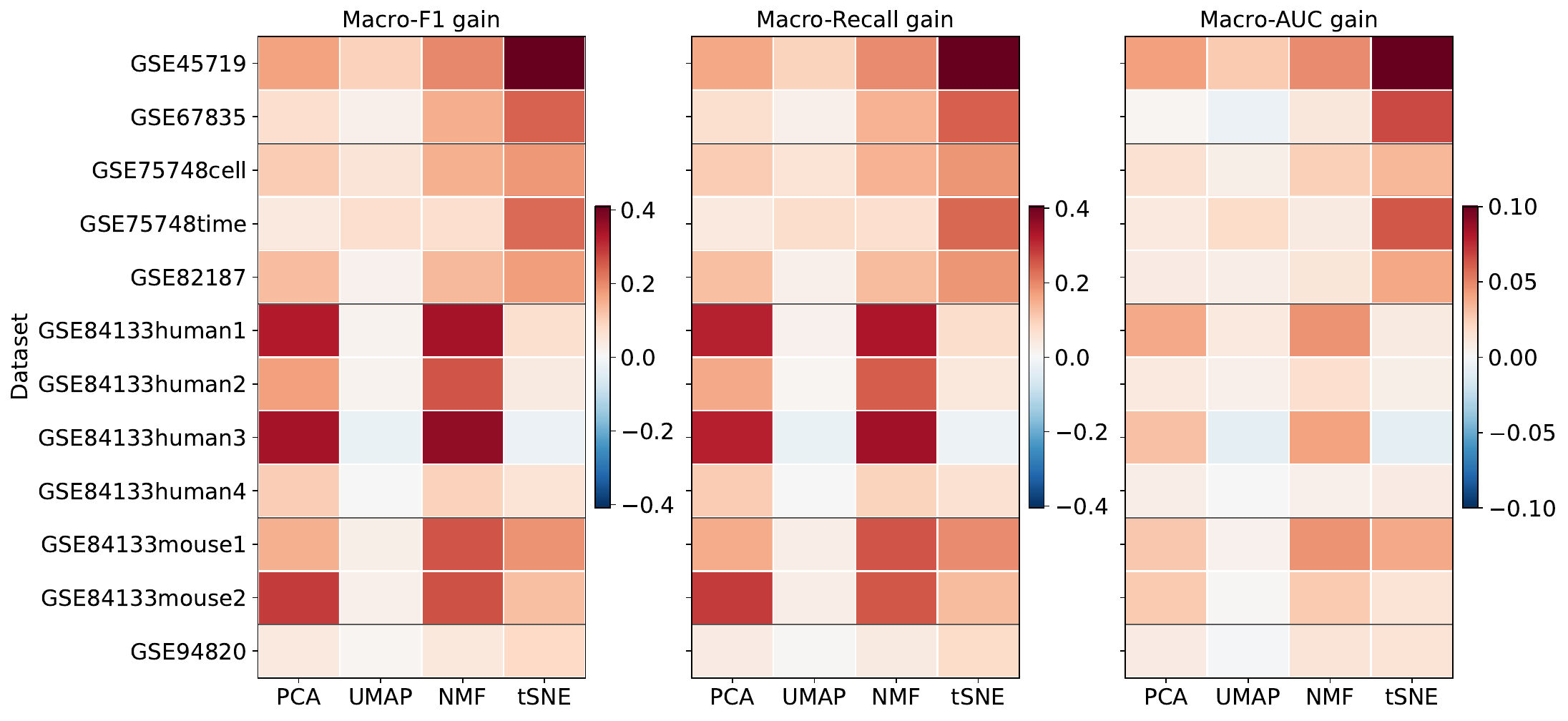}
    \caption{
    Performance gains of HSSE over classical dimensionality reduction
    methods (PCA, UMAP, NMF, and t-SNE) across twelve benchmark single-cell
    RNA-seq datasets.
    Each panel reports the gain of HSSE in terms of
    (left) Macro-F1,
    (middle) Macro-Recall, and
    (right) Macro-AUC, respectively.
    Positive values indicate consistent performance improvements achieved
    by HSSE over the corresponding baseline method.
    }
    \label{fig:hsse_classical_gain}
\end{figure*}

\noindent
\textbf{Overall Performance.} An overall analysis across datasets of varying sizes and complexities
demonstrates that HSSE provides a robust and competitive representation
for single-cell classification.
Across the twelve benchmark datasets, HSSE consistently achieves strong
performance in terms of Macro-F1, Macro-Recall, and Macro-AUC, indicating
balanced classification accuracy as well as reliable class separability.

HSSE is particularly effective on datasets of moderate scale, roughly
ranging from 400 to 2000 cells, where local neighborhood structure can be
reliably estimated while still benefiting from multiscale geometric
information.
Datasets such as GSE67835, GSE75748cell, GSE82187, and the mouse pancreatic
datasets exemplify this regime, with HSSE achieving consistently high
scores across all three evaluation metrics.

On very small datasets (e.g., GSE45719) and on the largest dataset
considered (GSE84133human3), the performance gap between HSSE and MDG
narrows or reverses slightly.
This behavior suggests that the effectiveness of persistent sheaf
spectral features depends on a balance between neighborhood resolution
and sample size.
When the number of cells is limited, local spectral statistics may become
less stable, whereas for extremely large and heterogeneous datasets,
global geometric structure may dominate the learned representations.

Overall, the results reported in Tables~2--4 indicate that by integrating
multiscale geometry with interpretable persistent sheaf spectral
information, HSSE achieves consistent improvements over MDG and
classical dimensionality reduction methods across a diverse collection
of benchmark single-cell RNA-seq datasets.
These findings highlight the robustness and general applicability of the
proposed framework.

Additional evaluation metrics, including Accuracy, Weighted F1, and MCC, are reported for all twelve datasets in Section S4 of the Supporting Information.  

\subsection{Ablation and Sensitivity Analysis}
\label{sec:43}

To better understand the empirical behavior of the proposed HSSE
framework, we conduct a focused ablation and sensitivity analysis with
respect to two key hyperparameters: the number of neighborhood sizes
used in multi-scale aggregation ($\mathcal{K}$), and the scale configuration
($S$) from which these neighborhood sizes are selected.
These two parameters jointly determine how local and nonlocal
neighborhood information is incorporated, and therefore directly
affect the representation quality.

Throughout this section, all experiments follow the same experimental
protocol described in Section~4.1.
When analyzing one parameter, all remaining settings are kept fixed to
isolate its individual effect.
Unless otherwise stated, the same four representative datasets
(GSE67835, GSE82187, GSE84133mouse1, and GSE94820) are used for all
experiments in this section. The full numerical results are provided in  Tables S3--S10 (Supporting Information, Section S5).

\subsubsection{Effect of the Number of Neighborhood Sizes ($\mathcal{K}$)}
\label{sec:43_k}

We first investigate how the classification performance varies as the
number of neighborhood sizes used in the multi-scale aggregation
increases.
In this experiment, the scale configuration is fixed for all datasets,
with the scale pool given by
\[
S = \{5, 14, 25, 37, 50\}.
\]
Starting from this fixed scale pool, we progressively increase the
number of neighborhood sizes used by the method, from $k=1$ to $k=10$,
while keeping all other hyperparameters unchanged.
This design allows us to examine the contribution of incorporating
additional neighborhood sizes without altering the underlying scale
configuration.

\begin{figure}[t]
    \centering
    \includegraphics[width=\textwidth]{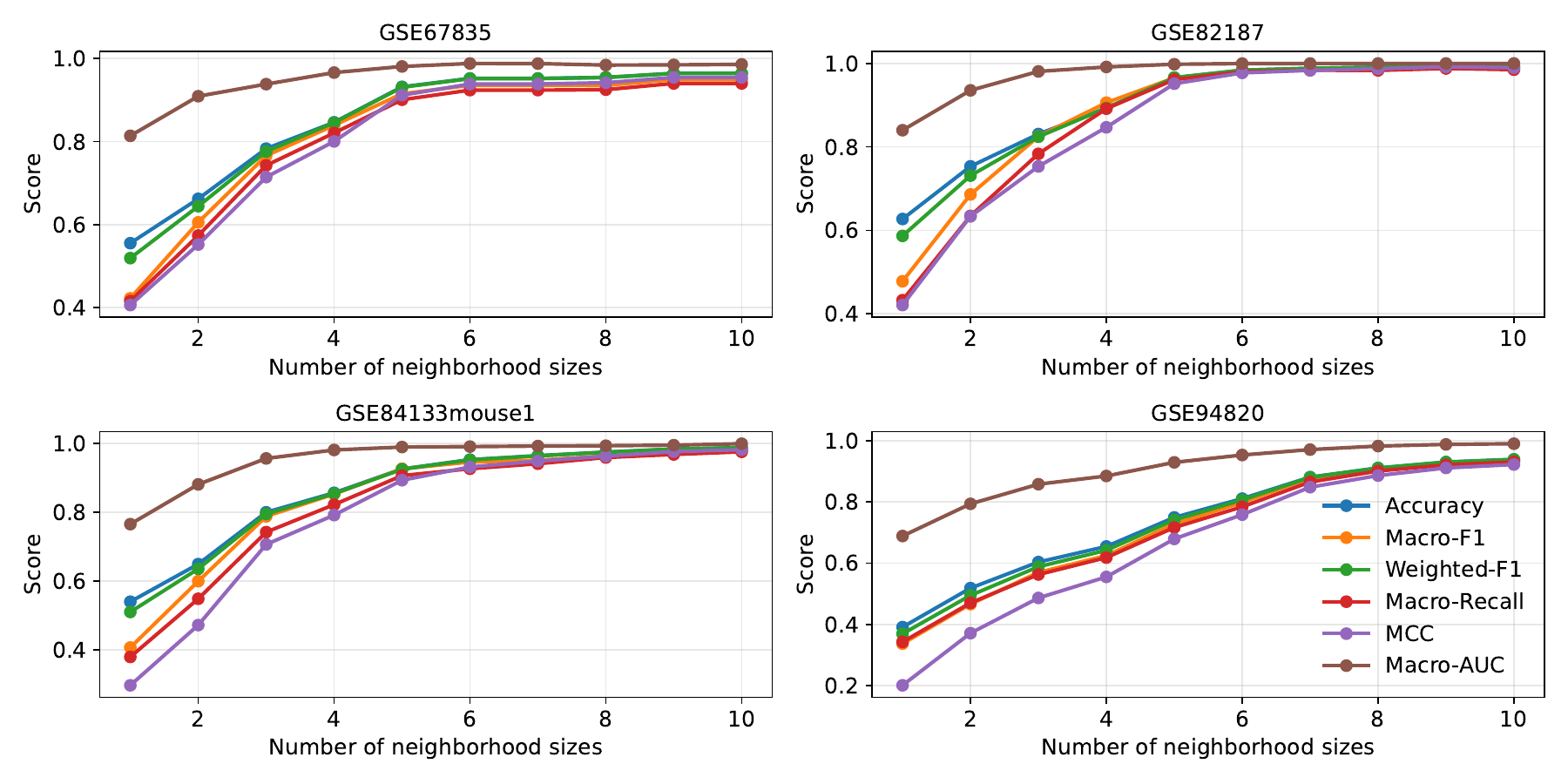}
    \caption{
    Sensitivity analysis with respect to the number of neighborhood sizes
    under a fixed scale configuration
    $S = \{5, 14, 25, 37, 50\}$.
    Results are shown for four representative datasets:
    GSE67835, GSE82187, GSE84133mouse1, and GSE94820.
    The horizontal axis denotes the number of neighborhood sizes used
    ($|\mathcal{K}|=1$ to $10$).
    Six evaluation metrics are reported: Accuracy, Macro F1, Weighted F1,
    Macro Recall, MCC, and Macro AUC.
    }
    \label{fig:k_sweep_six_metrics}
\end{figure}

As shown in Figure~\ref{fig:k_sweep_six_metrics}, a consistent trend is
observed across all datasets.
When only a small number of neighborhood sizes is used, the performance
is relatively limited, suggesting that very restricted neighborhood
context is insufficient to fully capture the underlying data structure.
As $k$ increases, all metrics improve substantially in the early range,
indicating that aggregating information from multiple neighborhood
sizes provides complementary structural cues.

For GSE67835, most metrics increase rapidly as $|\mathcal{K}|$ grows from 1 to
approximately 5.
Beyond this point, the curves become noticeably flatter, and further
increasing $|\mathcal{K}|$ leads to only modest changes.
Macro AUC reaches a high level early and remains stable, while Macro F1,
Macro Recall, and MCC continue to improve slightly before stabilizing.

A similar pattern is observed for GSE82187.
Performance improves markedly for small to moderate values of $k$ and
then remains close to a plateau.
Minor non-monotonic variations appear for some metrics at larger $k$,
but their magnitudes are limited, indicating that the method is not
highly sensitive to the exact choice of $k$ once a sufficient number of
neighborhood sizes is included.

For GSE84133mouse1, the early improvement is again pronounced.
While most metrics stabilize after moderate values of $k$, MCC exhibits
a more gradual increase and continues to change slightly even when other
metrics have largely saturated.
This behavior is consistent with MCC being more sensitive to residual
class-level confusion.

In contrast, GSE94820 shows a more gradual and sustained improvement
over the full range of $k$ considered.
The absence of a sharp plateau within this range suggests that this
dataset benefits more consistently from incorporating additional
neighborhood sizes.

Overall, these results indicate that incorporating multiple neighborhood
sizes is important for achieving strong performance, with the largest
gains occurring when increasing $k$ from very small to moderate values.
Beyond this range, the performance often becomes less sensitive to
further increases in $k$, although the degree of saturation depends on
the dataset.

\subsubsection{Effect of the Scale Configuration Size ($S$)}
\label{sec:43_s}

We next examine the influence of the scale configuration by varying the
number of scales included in the scale pool $S$.
In this analysis, all experimental settings are kept fixed except for the
size of $S$, which is progressively increased from one to five by adding
scales from a predefined pool.
Throughout this experiment, the set of neighborhood sizes used in
aggregation is fixed to
\[
\mathcal{K} = \{5, 10, 15, 20, 30, 40, 50, 60, 70, 80\}.
\]
Therefore, this study isolates the effect of the scale configuration
($S$) rather than changes in the neighborhood-size selection.
\begin{figure}[t]
  \centering
  \includegraphics[width=\linewidth]{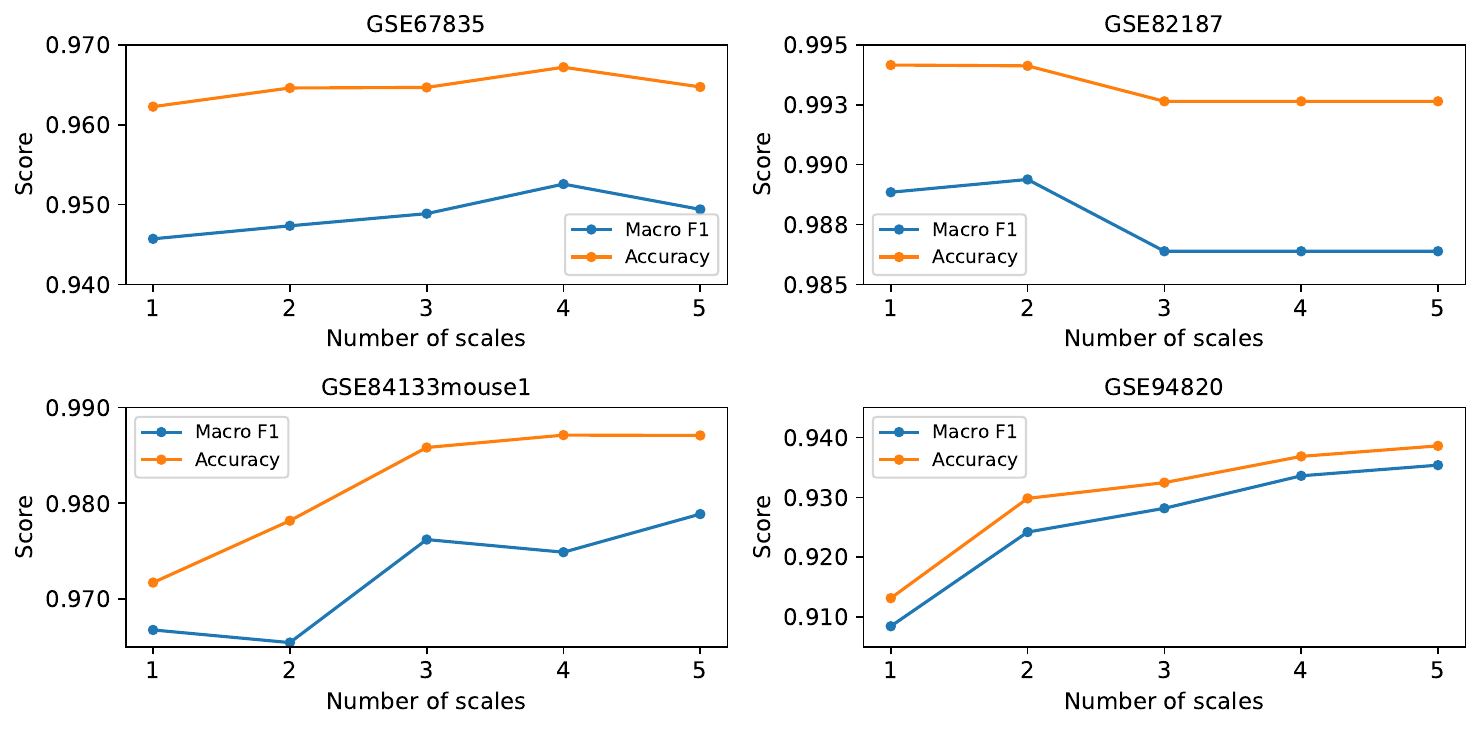}
  \caption{
  Effect of the scale configuration size ($|S|$) on classification
  performance under a fixed neighborhood-size set
  $\mathcal{K} = \{5, 10, 15, 20, 30, 40, 50, 60, 70, 80\}$.
  Macro F1 and Accuracy are reported as $|S|$ increases from 1 to 5 on four
  representative datasets: GSE67835, GSE82187, GSE84133mouse1, and GSE94820.
  }
  \label{fig:scale_two_metrics}
\end{figure}

Figure~\ref{fig:scale_two_metrics} summarizes the performance trends as
the number of scales in $S$ increases.
Across all datasets, using multiple scales consistently outperforms the
single-scale setting, indicating that aggregating neighborhood information
at different resolutions captures complementary structural patterns that are
not available from a single scale alone.

At the same time, the performance does not increase monotonically with
the size of $S$.
Instead, improvements are most apparent when moving from one to a small
number of scales, after which the gains gradually diminish and the curves
either saturate or exhibit mild fluctuations.
This behavior is consistent across datasets with varying levels of
difficulty.

For example, on GSE67835 and GSE82187, the performance gains saturate after
incorporating only a few scales.
On GSE84133mouse1 and GSE94820, adding additional scales leads to further
improvements before reaching a plateau.
Importantly, these variations are small in magnitude, suggesting that the
proposed framework is not overly sensitive to the exact choice of the scale
configuration size within the range considered.

Based on this analysis, five scales are used as the default scale
configuration in the experiments reported in this paper.
This choice provides consistently strong performance across datasets
while avoiding unnecessary complexity and excessive fine-tuning.

\section{Discussion}

This work investigates hierarchical sheaf spectral embeddings for
multi-scale representation learning, with particular emphasis on the
roles of neighborhood size, scale configuration, and topological
information across different homological dimensions.
The experimental results demonstrate that the proposed framework
achieves strong and stable performance across diverse datasets, while
exhibiting robust behavior with respect to key design choices.

A central observation from the experimental analysis is the importance
of incorporating multi-scale neighborhood information.
As shown in the sensitivity studies, increasing the number of
neighborhood sizes leads to substantial performance improvements
compared to using only a single neighborhood size.
The most pronounced gains occur when moving from very limited local
context to a moderate number of neighborhood sizes, after which the
performance tends to stabilize.
This pattern is consistently observed across datasets with different
sizes and class structures, indicating that the proposed framework
effectively integrates complementary information across scales while
remaining relatively insensitive to precise parameter choices.

A similar trend is observed when varying the scale configuration size
$|S|$ under a fixed set of neighborhood sizes.
Using multiple scales consistently outperforms the single-scale setting,
highlighting the benefit of aggregating information across different
resolutions.
At the same time, increasing the number of scales beyond a moderate
level yields diminishing returns, with performance curves either
saturating or exhibiting only minor fluctuations.
These results suggest that the proposed framework is robust to the
choice of scale configuration within a reasonable range, and that strong
performance can be achieved without extensive tuning of scale-related
parameters.

Beyond parameter sensitivity, it is also instructive to examine the
contribution of topological information at different homological
dimensions.
To this end, additional experiments reported in Section S2 of the Supporting
Information compare the performance obtained using only
$0$-dimensional eigenvalues with that obtained by combining
$0$- and $1$-dimensional information (see Fig.~S1).
Across multiple datasets, $0$-dimensional features alone are found to
capture a substantial portion of the discriminative structure.
In several cases, the performance achieved using only
$0$-dimensional information is close to that obtained when both
$0$- and $1$-dimensional eigenvalues are included.

These observations indicate that connectivity-based topological
information plays a dominant role for many of the datasets considered in
this study.
In particular, $0$-dimensional features effectively encode local
neighborhood organization and cluster-level structure, which are often
sufficient to distinguish classes in high-dimensional data.
At the same time, incorporating $1$-dimensional information can provide
additional gains in certain cases, reflecting the presence of more
complex relationships such as loops or higher-order interactions.
The magnitude of these gains, however, is dataset dependent and not
uniformly significant.

From a practical perspective, the choice between using only
$0$-dimensional information and incorporating higher-dimensional
topological features involves a trade-off between computational cost and
potential performance improvements.
While $0$-dimensional features provide an efficient and stable baseline,
including $1$-dimensional information offers a complementary enhancement
when computational resources permit.
Rather than advocating a fixed choice, the proposed framework supports
flexible integration of topological information at different dimensions,
allowing practitioners to balance representational richness against
computational considerations.

Overall, these results show that HSSE achieves robust performance by
aggregating spectral information across multiple scales and neighborhood
configurations, rather than relying on a single topological feature or a
carefully tuned parameter setting.
This aggregation strategy reduces sensitivity to individual design
choices and yields consistent classification performance across diverse
single cell datasets, without requiring extensive parameter optimization.
 
\section{Conclusion}

In this work, we proposed a hierarchical sheaf spectral embedding (HSSE)
framework for single-cell representative learning.
The framework integrates multi-scale low-dimensional embeddings with
persistent sheaf Laplacian analysis to extract cell-level features that
encode local relational structure across scales.
By aggregating spectral information from multiple neighborhood sizes,
embedding scales, and filtration values, HSSE produces representations
that can be directly used in downstream classification tasks.
Comprehensive experiments on twelve benchmark single-cell RNA
sequencing datasets demonstrate that the proposed framework achieves
competitive and, in many cases, improved classification performance
compared with existing manifold learning and topology-based methods.
Sensitivity analyses with respect to scale selection and neighborhood
size indicate that performance improvements arise from the hierarchical
aggregation strategy rather than reliance on a specific parameter
choice.
The results suggest that persistent sheaf Laplacians provide a useful
tool for modeling local structural variation in single-cell data.
While the current study focuses on classification tasks, the proposed
framework can be naturally extended to other downstream analyses.
Future work will explore alternative sheaf constructions, more
efficient implementations, and applications to additional types of
biological data, such as single-cell evolution and spatial transcriptomics data.

\section*{Supporting Information}

Additional materials, including experimental setup, evaluation metrics, and complete results for the sensitivity analysis, are provided in the Supporting Information.

\section*{Data and Code Availability}

All source code is available at \url{https://github.com/XiangXiangJY/HSSE.git}, and the single-cell RNA-seq dataset preprocessing scripts are adapted from  
\url{https://github.com/hozumiyu/TopologicalNMF-scRNAseq}.

	    \section*{Acknowledgements}
This work was supported in part by NIH grant R35GM148196, National Science Foundation grant  DMS2052983, Michigan State University Research Foundation, and Bristol-Myers Squibb 65109.

\bibliographystyle{plain}
\bibliography{refs}
   
\end{document}